\newcommand{\cC}{\mathcal{C}}
\newcommand{\Real}{\mathbb{R}}
\newcommand{\x}{\vct{x}}
\newcommand{\y}{\vct{y}}
\newcommand{\beq}{\begin{equation}}
\newcommand{\eeq}{\end{equation}}
\newcommand{\defeq}{\mathrel{\mathop:}=}
\renewcommand{\Pr}{\mathbb{P}}
\newtheorem{theorem}{Theorem}[section]
\newtheorem{definition}[theorem]{Definition}
\def\la{{\langle}}
\def\ra{{\rangle}}
\def\th{\ensuremath{\boldsymbol{\theta}}\xspace}
\def\hatth{\ensuremath{\widehat{\boldsymbol\theta}}\xspace}
\def\tilth{\ensuremath{\tilde{\boldsymbol\theta}}\xspace}
\def\I{\ensuremath{\mathbf{I}}}
\def\X{\ensuremath{\mathbf{X}}}
\def\cX{\ensuremath{\mathcal{X}}\xspace}
\def\E{\mathbf E}
\def\hth{\ensuremath{\what{\theta}}\xspace}
\newtheorem*{lemma*}{Lemma}
\newtheorem*{theorem*}{Theorem}
\newtheorem*{corollary*}{Corollary}
\newtheorem*{proposition*}{Proposition}
\newtheorem{lem}{Lemma}
\newtheorem{thm}{Theorem}
\newtheorem{ass}{Assumption}
\newtheorem{cor}{Corollary}
\newtheorem{prop}{Proposition}
\def\argmax{\ensuremath{\mbox{argmax}}}
\def\Pr{\ensuremath{\mbox{Pr}}}
\def\supp{\ensuremath{\mbox{supp}}\xspace}
\newcommand{\what}[1]{ {\ensuremath{\widehat{#1}}} }
\newcommand{\vast}{\bBigg@{3}}
\newcommand{\Vast}{\bBigg@{4}}
\def\cX{\ensuremath{\mathcal{X}}\xspace} 
\def\th{\ensuremath{\boldsymbol{\theta}}\xspace} 
\def\x{{{\mathbf x}}}
\def\y{{{\mathbf y}}}
\def\la{{\langle}}
\def\ra{{\rangle}}
\def\hatth{\ensuremath{\widehat{\boldsymbol\theta}}\xspace} 
\def\tilth{\ensuremath{\tilde{\boldsymbol\theta}}\xspace} 
\def\U{\ensuremath{\mathbf{U}}\xspace} 
\def\V{\ensuremath{\mathbf{V}}\xspace}
\def\cC{\ensuremath{\mathcal{C}}\xspace}
\def\W{\ensuremath{\mathbf{W}}}
\def\X{\ensuremath{\mathbf{X}}} 
\def\I{\ensuremath{\mathbf{I}}}
\newcommand{\mtx}[1]{\bm{#1}}
\newcommand{\tta}{\th}
\newcommand{\tstar}{\th_*}
\def\th{\ensuremath{\theta}\xspace}
\def\hth{\ensuremath{\what{\theta}}\xspace}
\def\th{{{\boldsymbol \theta}}}
\def\hth{{\what{\boldsymbol \theta}}} 
\def\X{\ensuremath{\mathbf{X}}\xspace}
\def\U{\ensuremath{\mathbf{U}}\xspace}
\def\x{\ensuremath{\mathbf{x}}\xspace}
\def\s{\ensuremath{\mathbf{s}}\xspace}
\def\R{\ensuremath{\mathbf{R}}} 
\def\Q{\ensuremath{\mathbf{Q}}} 
\def\E{\ensuremath{\mathbf{E}}}
\newlength{\dhatheight}
\newcommand{\lv}{\left\vert}
\newcommand{\rv}{\right\vert}
\newcommand{\lV}{\left\Vert}
\newcommand{\rV}{\right\Vert}
\renewcommand{\Pr}{\mathbb{P}}
\newcommand{\mX}{\mtx{X}}
\DeclarePairedDelimiter\floor{\lfloor}{\rfloor}
\title{Linear Bandits with Feature Feedback}
\author[1]{Urvashi Oswal}
\author[2]{Aniruddha Bhargava\thanks{This research was performed when A.B. was at University of Wisconsin-Madison.}}
\author[1]{Robert Nowak}
\affil[1]{University of Wisconsin-Madison}
\affil[2]{Amazon}
\begin{document} 
\date{}
\maketitle

\begin{abstract} 
This paper explores a new form of the linear bandit problem in which the algorithm receives the usual stochastic rewards as well as stochastic feedback about which features are relevant to the rewards, the latter feedback being the novel aspect. The focus of this paper is the development of new theory and algorithms for linear bandits with feature feedback. We show that linear bandits with feature feedback can achieve regret over time horizon $T$ that scales like $k\sqrt{T}$, without prior knowledge of which features are relevant nor the number $k$ of relevant features. In comparison, the regret of traditional linear bandits is $d\sqrt{T}$, where $d$ is the total number of (relevant and irrelevant) features, so the improvement can be dramatic if $k\ll d$. The computational complexity of the new algorithm is proportional to $k$ rather than $d$, making it much more suitable for real-world applications compared to traditional linear bandits. We demonstrate the performance of the new algorithm with synthetic and real human-labeled data.
\end{abstract}

\section{Introduction}

Linear stochastic bandit algorithms are used to sequentially select actions to maximize rewards.  The linear bandit model assumes that the expected reward of each action is an (unknown) linear function of a (known) finite-dimensional feature associated with the action. Mathematically, if $\x_t \in \R^d$ is the feature associated with the action chosen at time $t$, then the stochastic reward is
\begin{align}\label{eqn:reward}
y_t = \x_t^\top \tstar +  \eta_t,
\end{align}
where $\tstar$ is the unknown linear functional and $\eta_t$ is a zero mean random variable. The goal is to adaptively select actions to maximize the rewards. This involves (approximately) learning $\tstar$ and exploiting this knowledge. Linear bandit algorithms that exploit this special structure have been extensively studied and applied  \cite{rusmevichientong10linearly,ay11improved}. Unfortunately, standard linear bandit algorithms suffer from the curse of dimensionality.  The regret
grows linearly with the feature dimension $d$. The dimension $d$ may be quite large in modern applications (\textit{e.g.}, $1000$s of features in NLP or image/vision applications). However, in many cases the linear function may only involve a sparse subset of the $k<d$ features, and this can be exploited to partially reduce dependence on $d$. In such cases, the regret of sparse linear bandit algorithms scales like $\sqrt{dk}$ \cite{Abbasi-Yadkori2012sparse,lattimore2018bandit}.

We tackle the problem of linear bandits from a new perspective that incorporates feature feedback in addition to reward feedback, mitigating the curse of dimensionality. Specifically, we consider situations in which the algorithm receives a stochastic reward {\em and} stochastic feedback indicating which, if any, feature-dimensions were relevant to the reward value. For example, consider a situation in which users rate recommended text documents and additionally highlight keywords or phrases that influenced their ratings. Figure~\ref{fig:feedback} illustrates the idea. Obviously, the additional ``feature feedback'' may significantly improve an algorithm's ability to home-in on the relevant features. The focus of this paper is the development of new theory and algorithms for linear bandits with feature feedback. We show that the regret of linear bandits with feature feedback scales linearly in $k$, the number of relevant features, without prior knowledge of which features are relevant nor the value of $k$. This leads to large improvements in theory and practice.

\begin{figure}[H]
\centering \includegraphics[width=0.75\textwidth]{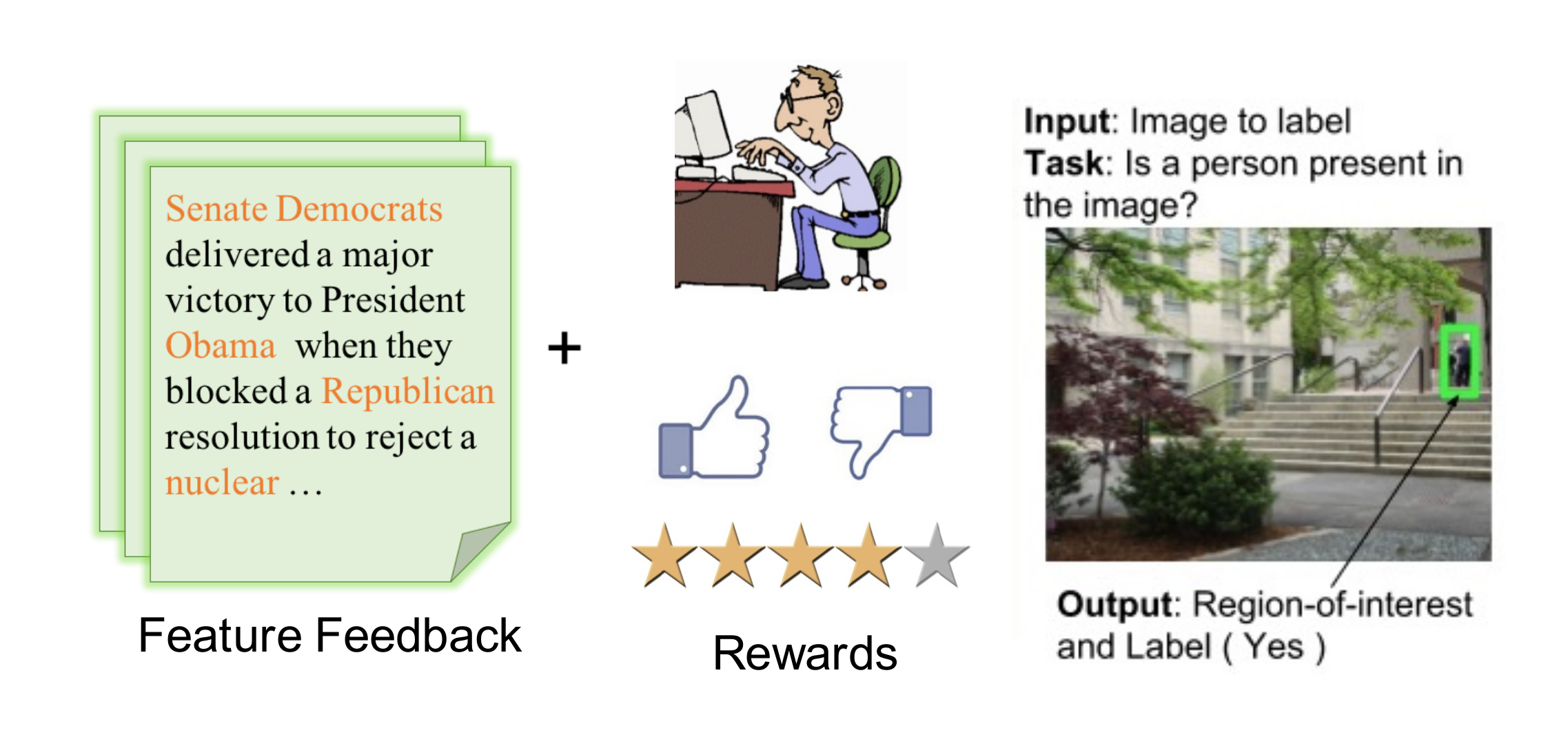} 
\caption{(Left) Highlighted words for text-based applications and (Right) Region-of-interest feature feedback for image-based applications.}
\label{fig:feedback}
\end{figure}

Perhaps the most natural and simple way to leverage the feature feedback is an explore-then-commit strategy. In the first $T_0$ steps the algorithm selects actions at random and receives rewards and feature feedback. If $T_0$ is sufficiently large, then the algorithm will have learned all or most of the relevant features and it can then switch to a standard linear bandit algorithm operating in the lower-dimensional subspace defined by those features. There are two major problems with such an approach:
\begin{enumerate}
\item The correct choice of $T_0$ depends on the prevalence of relevant features in randomly selected actions, which generally is unknown. If $T_0$ is too small, then many relevant features will be missed and the long-run regret will scale linearly with the time horizon. If $T_0$ is too large, then the initial exploration period will suffer excess regret. This is depicted in Figure~\ref{fig.two_stage_comparison}.
\item Regardless of the choice of $T_0$, the regret will grow linearly during the exploration period. The new FF-OFUL algorithm that we propose combines exploration and exploitation from the start and can lead to smaller regret initially and asymptotically as shown in Figure~\ref{fig.two_stage_comparison}.
\end{enumerate}
These observations motivate our proposed approach that dynamically adjusts the trade-off between exploration and exploitation.  A key aspect of the approach is that it is automatically adaptive to the unknown number of relevant features $k$. Our theoretical analysis shows that its regret scales like $k\sqrt{T}$. Experimentally, we show the algorithm generally outperforms traditional linear bandits and the explore-then-commit strategy. This is due to the fact that the dynamic algorithm exploits knowledge of relevant features as soon as they are identified, rather than waiting until all or most are found.  A key consequence is that our proposed algorithm yields significantly better rewards at early stages of the process, as shown in Figure~\ref{fig.two_stage_comparison} and in more comprehensive experiments later in the paper. The intuition for this is that estimating $\tstar$ on a fraction of the relevant coordinates can be exploited to recover a fraction of the optimal reward. Similar ideas are explored in linear bandits (without feature feedback) in \citet{deshpande2012linear}.

\begin{figure}[t!]
\centering \includegraphics[width=0.65\textwidth]{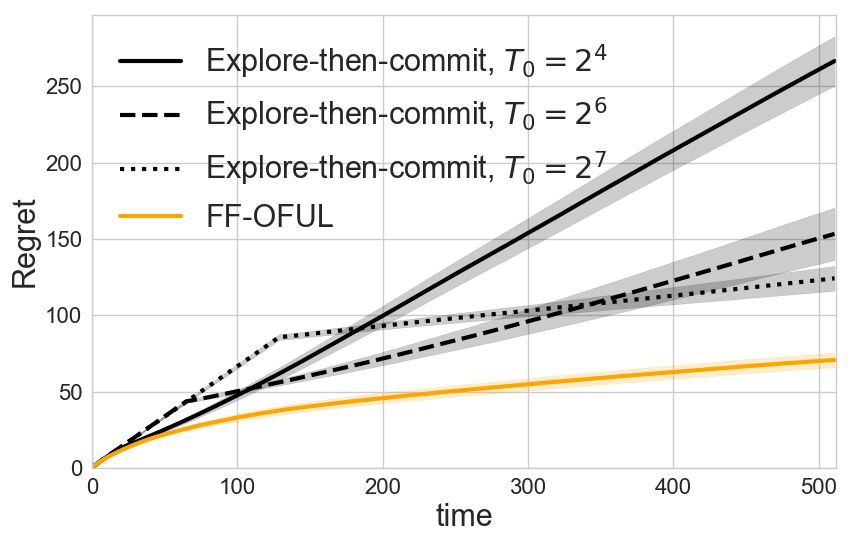} 
\caption{Comparison of the explore-then-commit strategy for different values of $T_0$ and our new FF-OFUL algorithm which combines exploration and exploitation steps (details of data generation in Section~\ref{sec:synthetic}).}
\label{fig.two_stage_comparison}
\end{figure}

\subsection{Motivating Application}

Consider the application of recommending news articles. At every time instant, the algorithm recommends an article to the user from a large database containing articles about topics like ``politics", ``technology", ``sports". The user provides a numerical reward corresponding to her assessment of the document's value. The goal of the algorithm is to maximize the cumulative reward over time. This can be challenging if the majority of the documents in the database are not of interest to the user. Linear bandit algorithms strike a balance between {\em exploration} of the database to ascertain the user's interests and {\em exploitation} by retrieving documents similar to those that have received the highest rewards (In the paper, we also refer to exploration-exploitation in the context of the knowledge of relevant features). Typical word models such as TF-IDF result in features ($d$) in the order of thousands of dimensions. The high-dimensionality makes it challenging to employ state-of-the-art algorithms since it involves maintaining and updating a $d \times d$ matrix at every stage. The approach taken in this work is to augment the usual reward feedback with additional feature feedback by allowing the user to highlight words or phrases to help orient the search. As an example, suppose the user is looking for articles about NFL football. They can highlight words such as ``Patriots", ``Football", ``Rams" to reinforce search in that direction and also negative words such as ``politics", ``stocks" to avoid in the document search. However words such as ``grass", ``air"  may be common words and therefore less relevant to the search. The goal is to give the user a tool to speed up their search with nearly effortless feedback.

\subsection{Definitions}

For round, $t$, let $\cX_t \subseteq \Real^d$ be the set of actions/items provided to the learner. We assume the standard linear model for rewards with a hidden weight vector $\tstar \in \Real^d$. If the learner selects an action, $\x_t \in \cX_t $, it receives reward, $y_t$, defined in (\ref{eqn:reward}) where $\eta_t$ is noise with a sub-Gaussian random distribution with parameter $R$.

For the set of actions $\cX_t$, the optimal action is given by, $\x^*_t \defeq \argmax_{\x \in \cX_t} \x^\top \tstar$, which is unknown. We define regret as,
\begin{equation}\label{eqn:regret}
R_T = \sum_{t=1}^T \left( \x^{*\top}_t \tstar - \x^{\top}_t \tstar \right).
\end{equation}
This is also called cumulative regret but, unless stated otherwise, we will refer to it as regret. We refer to the quantity $\x^{*\top}_t \tstar - \x^{\top}_t \tstar$ as the instantaneous regret which is the difference between the optimal reward and the reward received at that instant. We make the standard assumption that the algorithm is provided with an enormous action set which is only changing slowly over time, for instance, from sampling the actions without replacement ($\mathcal{X}_{t+1} = \mathcal{X}_t\backslash \x_t$).

\subsection{Related Work}

The area of contextual bandits was introduced by \citet{ginebra1995response}. The first algorithms for linear bandits appeared in \citet{abe99associative} followed by those using the optimism in the face of uncertainty principle, \citet{auer02using, dani2008stochastic}. \citet{rusmevichientong10linearly} showed matching upper and lower bounds when the action (feature) set is a unit hypersphere. Finally, \citet{ay11improved} gave a tight regret bound using new martingale techniques. We use their algorithm, OFUL, as a subroutine in our work. In the area of sparse linear bandits, regret bounds are known to scale like $\sqrt{kdT}$, \cite{Abbasi-Yadkori2012sparse, lattimore2018bandit}, when operating in a $d$ dimensional feature space with $k$ relevant features. The strong dependence on the ambient dimension $d$ is unavoidable without further (often strong and unrealistic) assumptions. For instance, if the distribution of feature vectors is isotropic or otherwise favorably distributed, then the regret may scale like $k\log(d)\sqrt{T}$, e.g., by using incoherence based techniques from compressed sensing \cite{carpentier2012bandit}. These results also assume knowledge of sparsity parameter $k$ and without it no algorithm can satisfy these regret bounds for all $k$ simultaneously.

In contrast, we propose a new algorithm that automatically adapts to the unknown sparsity level $k$ and removes the dependence of regret on $d$ by exploiting additional feature feedback. In terms of feature feedback in text-based applications, \citet{croft1989experiments} have proposed a method to reorder documents based on the relative importance of words using feedback from users. \citet{poulis2017learning} consider a similar problem but for learning a linear classifier. We use a similar feedback model but focus on the bandit setting where such feedback can be naturally collected along with rewards to improve search while striking a balance between exploration and exploitation leading to interesting tradeoffs. The idea of allowing user's to provide richer forms of feedback has been studied in the active learning literature \cite{raghavan2006active, druck2009active} and also been considered in other (interactive) learning tasks, such as cognitive science \cite{roads2016using}, machine teaching \cite{chen2018near}, and NLP tasks \cite{yessenalina2010multi}.

\section{Model for Feature Feedback}

The algorithm presents the user with an item (\textit{e.g.}, document) and the user provides feedback in terms of whether they like the item or not (logistic model) or how much they like it (inner product model). The user also selects a few features (\textit{e.g.}, words), if they can find them, to help orient the search. The reasonable assumption in the high-dimensional setting is that the linear bandit weight vector $\tstar$ is sparse (or approximately sparse). Suppose one is searching for articles about machine learning. It is easy to see how one may pay attention to words like pattern, recognition, and networks, but the vast majority of words may not help at all in determining if that article is about machine learning. 

\begin{ass}[Sparsity]\label{ass:sparsity}
The hidden weight vector $\tstar \in \Real^d$ is $k$-sparse and $k$ is unknown. In other words, $\tstar$ has at most $k$ non-zero entries or if $\supp(\tstar) = \{ i | {\tstar}_i \neq 0\}$ then $|\supp(\tstar)|= k \leq d$.
\end{ass}

Assumption~\ref{ass:sparsity} ensures that there are at most $k$ relevant features, however we stress that the value of $k$ is unknown (it is possible that all $d$ features are relevant). We make the following underlying assumptions about feature feedback.

\begin{ass}[Discoverability]\label{ass:discoverability}
For an action $\x \in \cX$ selected uniformly at random, the probability that a relevant feature is present and is selected is at least $p > 0$ (unknown).
\end{ass}

Assumption~\ref{ass:discoverability} ensures that while every item may not have relevant features, we are able to find them with a non-zero probability when searching through items at random. This assumption can be viewed as a (possibly pessimistic) lower bound on the rate at which relevant features are discovered. For example, it is possible that exploitative actions may yield relevant features at a higher rate (e.g., relevant features may be correlated with higher rewards). We do not attempt to model such possibilities since this would involve making additional assumptions that may not hold in practice.

\begin{ass}[Noise]\label{ass:nonoise}
Users may report irrelevant features. The number of reported irrelevant features (denoted by $0\leq k' \leq d-k$) is unknown in advance. 
\end{ass}

Assumption~\ref{ass:nonoise} accounts for ambiguous features that are irrelevant but users erring on the side of marking as relevant.

The set up is as follows: we have a set of items or actions, $\cX \subseteq \Real^d$ that we can propose to the users. There is a hidden weight vector $\tstar \in \Real^d$ that is $k$-sparse. We will further assume that $\lV \tstar \rV \leq S$ and the action vectors are bounded in norm: $\forall \x \in \cX, \lV \x \rV \leq L$.  Besides the reward $y_t$, defined in (\ref{eqn:reward}), at each time-step the learner gets $\mathcal{I}_t \subseteq \supp(\tstar)$ which is the relevance feedback information. The model further specifies that $\forall j \in \supp(\tstar), \Pr(j \in \mathcal{I}_t) \geq p$. That is, the probability a relevant feature is selected at random is at least $p$. We need this assumption to make sure that we can find all the relevant features.

\section{Algorithm}

In this section, we introduce an algorithm that makes use of feature relevance feedback in order to start with a small feature space and gradually increase the space over time without the knowledge of $k$. We begin by reminding ourselves of the following theorem that bounds the regret (\ref{eqn:regret}) of the OFUL algorithm (stated as  Algorithm~\ref{alg:OFUL}) based on the principle of optimism in the face of uncertainty. The algorithm constructs ellipsoidal confidence sets centered around the ridge regression estimate, using observed data such that the sets contain the unknown $\theta_{\ast}$ with high probability, and selects the action/item that maximizes the inner product with any $\theta$ from the confidence set. 

\begin{thm}[\citet{ay11improved}]\label{thm:OFULRegret}
Assume that $\forall t > 0$ and $\x \in \cX_t \subset \R^d$, $\la \x, \tstar \ra \in [-1,1]$. Then with probability at least $1 - \delta$, the regret of OFUL satisfies:
\begin{align*} 
\forall t, R_t \leq & 4 \sqrt{td \log(\lambda + tL/d)} ( \lambda^{1/2} S + R \sqrt{2 \log(1/\delta) + d \log(1 + tL/(\lambda d))} ) 
\end{align*}
where $\lambda > 0$ is the ridge regression parameter of OFUL.
\end{thm}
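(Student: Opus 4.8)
The plan is to reconstruct the argument of \citet{ay11improved}, which rests on three ingredients: a self-normalized concentration inequality producing an \emph{anytime} confidence ellipsoid around the ridge estimate, the optimism principle that converts instantaneous regret into the width of that ellipsoid along the chosen direction, and an elliptical-potential (log-determinant) bound that controls the accumulated widths. Throughout I would write $\barV_t = \lambda \I + \sum_{s=1}^{t}\x_s\x_s^\top$ for the regularized Gram matrix and $\hat{\th}_t = \barV_t^{-1}\sum_{s=1}^{t}\x_s y_s$ for the ridge estimate, and for a positive-definite $\M$ set $\|\vct{z}\|_{\M} = \sqrt{\vct{z}^\top \M \vct{z}}$.

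First I would establish the confidence set. Substituting $y_s = \la \x_s, \tstar\ra + \eta_s$ into the ridge estimate yields the decomposition $\hat{\th}_t - \tstar = \barV_t^{-1}\big(\sum_{s=1}^t \x_s \eta_s\big) - \lambda \barV_t^{-1}\tstar$. The deterministic term is controlled by $\|\tstar\|\le S$, while the stochastic term $\sum_s \x_s\eta_s$ is a vector-valued martingale whose self-normalized norm $\|\sum_s \x_s\eta_s\|_{\barV_t^{-1}}$ is bounded, simultaneously over all $t$, by the method-of-mixtures tail inequality for sub-Gaussian noise with parameter $R$. This gives that, with probability at least $1-\delta$ and for every $t$, $\tstar$ lies in $\mathcal{C}_t = \{\th : \|\th - \hat{\th}_{t-1}\|_{\barV_{t-1}} \le \beta_{t-1}\}$ with $\beta_{t-1} = \lambda^{1/2}S + R\sqrt{2\log(1/\delta) + \log(\det \barV_{t-1}/\det(\lambda\I))}$; the uniformity in $t$ is exactly what lets the conclusion be stated as $\forall t$ inside a single high-probability event.

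Next I would invoke optimism. At round $t$ the algorithm chooses $(\x_t,\tilde{\th}_t) = \argmax_{\x\in\cX_t,\,\th\in\mathcal{C}_t}\la \x,\th\ra$; since $\tstar\in\mathcal{C}_t$ the optimistic value dominates $\la \x_t^*,\tstar\ra$, so the instantaneous regret satisfies $r_t = \la \x_t^*,\tstar\ra - \la \x_t,\tstar\ra \le \la \x_t, \tilde{\th}_t - \tstar\ra$. Cauchy--Schwarz in the $\barV_{t-1}$ geometry together with the triangle inequality (both $\tilde{\th}_t$ and $\tstar$ lie in $\mathcal{C}_t$) then gives $r_t \le 2\beta_{t-1}\,\|\x_t\|_{\barV_{t-1}^{-1}}$, while the hypothesis $\la\x,\tstar\ra\in[-1,1]$ gives the crude clip $r_t\le 2$. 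Summing with Cauchy--Schwarz and monotonicity of $\beta_s$, $R_t \le \sqrt{t\sum_{s=1}^{t} r_s^2} \le 2\beta_t\sqrt{t\sum_{s=1}^{t}\min(1,\|\x_s\|_{\barV_{s-1}^{-1}}^2)}$.

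The final step is the elliptical-potential lemma $\sum_{s=1}^{t}\min(1,\|\x_s\|_{\barV_{s-1}^{-1}}^2) \le 2\log(\det\barV_t/\det(\lambda\I))$, combined with the determinant--trace inequality $\det\barV_t \le (\lambda + tL/d)^d$ coming from AM--GM on the eigenvalues and $\mathrm{tr}\,\barV_t \le \lambda d + tL$. Substituting these into $\beta_t$ and into the potential sum, and absorbing the numerical constants, reproduces the stated bound $R_t \le 4\sqrt{td\log(\lambda+tL/d)}\,(\lambda^{1/2}S + R\sqrt{2\log(1/\delta)+d\log(1+tL/(\lambda d))})$. I expect the genuine obstacle to be the self-normalized tail inequality of the first step: bounding a data-dependent, self-normalized martingale uniformly in time requires the pseudo-maximization / method-of-mixtures device rather than a naive union bound over time, and this is precisely the technical heart of \citet{ay11improved}; the optimism reduction and the determinant/potential estimates are comparatively routine.
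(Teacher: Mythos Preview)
Your reconstruction is correct, but note that the paper does not actually supply its own proof of this theorem: it is stated as a quotation of \citet{ay11improved} and used as a black box. The closest the paper comes to proving it is in the appendix, where the proof of the Extended OFUL bound (Lemma~\ref{lem:oful_extended}) explicitly ``follow[s] the proof of Theorem~3 in \cite{ay11improved}'' and reproduces exactly the structure you outline: the self-normalized martingale argument guarantees $\tstar$ lies in the confidence ellipsoid, optimism plus Cauchy--Schwarz in the $\barV_{t-1}$ geometry gives $r_t \le 2\sqrt{\beta_{t-1}(\delta)}\,\|\x_t\|_{\barV_{t-1}^{-1}}$, and Lemma~11 of \cite{ay11improved} (the elliptical-potential / log-determinant bound) finishes the sum. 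So your three-ingredient plan matches both the cited source and the paper's own use of that argument; there is no alternative route to compare against.
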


\begin{algorithm}[t!]
{
  \begin{algorithmic}[1]
    \FOR {$t = 1,2, \ldots, T-1$}
    \STATE $(\x_t, \tilde{\tta}_t) = \argmax_{(\x, \tta) \in \cX_t \times \cC_{t-1}} \la \x, \tta \ra$
    \STATE Select action $\x_t$ and receive reward $y_t$.
    \STATE Update $\overline{\V}_t =(\X_t^T \X_t + \lambda \I)$ and $\hth_t = \overline{\V}_t ^{-1} \X_t^T \y_t$ 
    \STATE Update ellipsoidal confidence set $\cC_t$ as \\
     \small{$\cC_t = \left\{ \th \in \R^d : \|\hth_t - \th\|_{\overline{\V}_t} \leq f(R, \X_t, \y_t, \lambda, \delta, S) \right\}$}\\
     (for details on $f(\cdot)$ see \cite{ay11improved})
    \ENDFOR
  \end{algorithmic}
  \caption{OFUL from~\cite{ay11improved}}
  \label{alg:OFUL}
}
\end{algorithm}

Roughly, this theorem provides a $\tilde{O}(d\sqrt{t})$ bound on the regret of OFUL stated as Algorithm~\ref{alg:OFUL} by ignoring constants and logarithmic terms. We will combine this with a form of $\epsilon$-greedy algorithm due to~\cite{sutton1998reinforcement} to prove a result similar to Theorem~\ref{thm:OFULRegret} but reduce the dependence on the dimension from $d$ to $k$. 

In order to do so, we must discover the support of $\tstar$. The idea being that we apportion a set of actions to random plays in order to guarantee that we find all the relevant features, and the remaining time we will run OFUL on the identified relevant dimensions. Reducing the proportion of random actions over time guarantees that the regret remains sub-linear in time. We propose Algorithm~\ref{alg:contHypOFUL} to exploit feature feedback. Here, at each time $t$, with probability proportional to $1/\sqrt{t}$, the algorithm selects an action/item to present at random, otherwise it selects the item recommended by feature-restricted-OFUL. 

All updates are made only in the dimensions that have been marked as relevant and the space is dynamically increased as new relevant features are revealed. If nothing is marked as relevant, then by default the actions are selected at random, potentially suffering the worst possible reward but, at the same time, increasing our chances of getting relevance feedback leading to a trade-off. As time goes on, more relevance information is revealed. Note that the algorithm is adaptive to the unknown number of relevant features $k$. If $k$ were known, we could stop looking for features when all relevant ones have been selected.  We find that in practice, this algorithm has an additional benefit of being more robust to changes in the ridge parameter ($\lambda$) due to its intrinsic regularization of restricting the parameter space.

\begin{algorithm}[t!]
{
  \begin{algorithmic}[1]
    \STATE Let the set of relevant indices, $\mathcal{R}_0, \mathcal{I}_0 = \{\}$.
    \WHILE{$\mathcal{I}_0 $ is empty}
    	\STATE Select action at random, $\mathcal{I}_0 = \{$ indices revealed $\} $
    \ENDWHILE
    \STATE $\mathcal{R}_{1} = \mathcal{R}_0 \bigcup \mathcal{I}_0$
    \STATE Initialize $\cC_0$ using actions sampled.
    \FOR {$t=1,2, \ldots, T$}
    \STATE Let $\mX_t$ be the feature matrix restricted to $\mathcal{R}_{t}$.
    \STATE Set $\epsilon_t = 1/\sqrt{t}$. Draw $b_t$ from bernoulli$(\epsilon_t)$
     \IF{$b_t = 1$}
        \STATE Pick an action $\x_t$ uniformly at random from $\mathcal{X}_t$, 
        \ELSE 
        \STATE Pick $(\x_t, \tilde{\tta}_t) = \argmax_{(\x, \tta) \in \cX_t \times \cC_{t-1}} \la \x, \tta \ra$
        \ENDIF
        \STATE With action $\x_t$ observe reward $y_t$ and indices, $\mathcal{I}_t$.
         \STATE Update $\mathcal{R}_{t} = \mathcal{R}_t \bigcup \mathcal{I}_t$
        \IF{$\mathcal{I}_t$ is empty}
        \STATE Rank one update to $\overline{\V}_t, \hth_t, \cC_{t}$ (see Algorithm~\ref{alg:OFUL}) using $(y_t, \x_t)$
        \ELSE
        \STATE Update $\mX_t$ with features in $\mathcal{R}_t$.  
        \STATE Recompute $\overline{\V}_t, \hth_t, \cC_{t}$ with new feature set $\mX_t$. 
        \ENDIF

    \ENDFOR
  \end{algorithmic}
  \caption{Feature Feedback OFUL (FF-OFUL)}
  \label{alg:contHypOFUL}
}
\end{algorithm}

\section{Regret Analysis}

In this section, we state the regret bounds for the FF-OFUL algorithm along with a sketch of the proof and discussion on approaches to improve or generalize the bounds. The more subtle proof details are deferred to the appendix.

\subsection{Regret Bound for Algorithm~\ref{alg:contHypOFUL} (FF-OFUL)}
Recall that the norm of the actions are bounded by $L$ and the hidden weight vector $\tstar \in \R^d$ is also bounded in norm by $S$. Therefore, for any action, the worst-case instantaneous regret can be derived using Cauchy-Schwarz as follows:
\begin{align*}
\lvert\left\langle \x^*, \tstar \right\rangle - \left\langle \x, \tstar  \right\rangle \rvert \quad & \leq \lvert  \left\langle \x^*, \tstar \right\rangle \rvert +\lvert \left\langle \x, \tstar \right\rangle \rvert \quad\\
& \leq \lV \x^* \rV \lV \tstar \rV + \lV \x \rV \lV \tstar \rV \quad \\
& \leq 2 S L
\end{align*}
We provide the main result that bounds the regret (\ref{eqn:regret}) of Algorithm~\ref{alg:contHypOFUL} in the following theorem.

\begin{thm}\label{thm:HybOFULRegret}
With the same assumptions as Theorem~\ref{thm:OFULRegret}: $\forall t >  0$ and $\x \in \cX_t$, $\la \x, \tstar \ra \in [-1,1]$, if we have the additional assumptions~\ref{ass:sparsity}, \ref{ass:discoverability} and \ref{ass:nonoise} (k' = 0). Then with probability at least $1 - \delta$, the cumulative regret after $T$ time steps for Algorithm~\ref{alg:contHypOFUL} is:
\begin{align*}
R_T & \leq \frac{8 S L}{\log 6M/\delta} \left( \frac{\log 3k/\delta}{\log 1/(1-p)} \right)^2  + \log_2\frac{T}{2} \left( 3 S L \sqrt{T \log \frac{6M}{\delta}}\right) + 4 \log_2\frac{T}{2} \\
&  \sqrt{\frac{T}{2} k \log(\lambda + n L/k) } \left( \lambda^{1/2} S\right. + \left. R \sqrt{2 \log(3M/\delta) + k \log(1 + T L/(2 \lambda k))} \right).
\end{align*}
where $M =\log_2\frac{T}{2}$, $\lambda > 0$ is the ridge regression parameter and $k$ is the (unknown) number of relevant features.
\end{thm}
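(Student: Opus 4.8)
The plan is to split the cumulative regret $R_T=\sum_{t=1}^{T}r_t$, with instantaneous regret $r_t=\langle\x^*_t,\tstar\rangle-\langle\x_t,\tstar\rangle$, according to the indicator $b_t$ drawn in Algorithm~\ref{alg:contHypOFUL}: the exploration rounds ($b_t=1$, action drawn uniformly from $\cX_t$) and the exploitation rounds ($b_t=0$, action chosen by feature-restricted OFUL). I would then organize time into $M=\log_2(T/2)$ doubling epochs $[2^{i-1},2^{i})$, on each of which $\epsilon_t=1/\sqrt{t}$ is essentially constant. The first two terms of the bound will come from the exploration rounds and the recovery of $\supp(\tstar)$, and the third from applying Theorem~\ref{thm:OFULRegret} to the OFUL sub-run on each epoch in the reduced feature space.

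First I would control feature discovery using Assumption~\ref{ass:discoverability}: each uniformly random action reveals a fixed $j\in\supp(\tstar)$ with probability at least $p$, so after $\tau$ exploration rounds the chance $j$ is still missing is at most $(1-p)^{\tau}$, and a union bound over the $k$ relevant coordinates gives that $\tau=\log(3k/\delta)/\log(1/(1-p))$ random plays recover the entire support with probability at least $1-\delta/3$ (here $k'=0$ guarantees $\mathcal{R}_t\subseteq\supp(\tstar)$, so discovered indices never include irrelevant ones). Since each round is an exploration round independently with probability $1/\sqrt{t}$, the number of random plays up to time $t$ concentrates around $\sum_{s\le t}s^{-1/2}\approx 2\sqrt{t}$, so the $\tau$-th random play occurs within $O(\tau^2)$ rounds; bounding every round of this discovery phase by the worst-case $2SL$ derived before the theorem statement produces the first term. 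For the remaining exploration rounds I would bound the count per epoch by a Bernstein/martingale inequality with per-epoch failure probability $\delta/(3M)$ (yielding the $\log(6M/\delta)$ and $\sqrt{T}$ factors), charge each at most $2SL$, and sum over the $M$ epochs to obtain the second term.

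For the exploitation rounds I would invoke Theorem~\ref{thm:OFULRegret}. Once $\mathcal{R}_t=\supp(\tstar)$ and $|\mathcal{R}_t|\le k$, feature-restricted OFUL is a genuine $k$-dimensional OFUL instance, so applying Theorem~\ref{thm:OFULRegret} with $d$ replaced by $k$, horizon at most $T/2$ (the length of the last epoch), and confidence level $\delta/(3M)$ bounds the OFUL regret on each epoch by the stated quantity; summing over the $M$ epochs gives the third term, and the $\log(3M/\delta)$ inside the root is exactly the per-epoch confidence level. A final union bound over the discovery event and the $M$ epochwise confidence events keeps the overall failure probability below $\delta$.

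The main obstacle is reconciling OFUL's fixed-dimension guarantee with the dynamically growing feature space. Before $\supp(\tstar)$ is fully recovered, OFUL operates in a strict subspace, its optimistic action can be arbitrarily suboptimal, and only the crude $2SL$ bound is available; the whole argument hinges on confining this wrong-subspace regime to the first $O(\tau^2)$ rounds, which is delicate precisely because the exploration probability $1/\sqrt{t}$ decays and a late-discovered feature would force a costly restart deep into the horizon. Coupling the discovery stopping time with the epoch decomposition --- so that Theorem~\ref{thm:OFULRegret} is applied only after $\mathcal{R}_t$ has stabilized at the true support, while the at-most-$k$ feature-set updates and the data gathered in smaller subspaces are absorbed into the $\log_2(T/2)$ factor --- is the technical crux where the constants are pinned down.
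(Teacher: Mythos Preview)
Your overall decomposition---doubling epochs, a discovery phase bounded by worst-case $2SL$ regret over roughly $\tau^2$ rounds with $\tau=\log(3k/\delta)/\log(1/(1-p))$, a Hoeffding-type count of exploration rounds per epoch, and a per-epoch OFUL bound in dimension $k$---matches the paper's strategy almost exactly, including the choice $\delta_1=\delta_3=\delta/(3M)$, $\delta_2=\delta/3$ that produces the $\log(6M/\delta)$ and $\log(3k/\delta)$ constants.

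There is, however, one genuine gap. You write that once $\mathcal{R}_t=\supp(\tstar)$, ``feature-restricted OFUL is a genuine $k$-dimensional OFUL instance'' and that Theorem~\ref{thm:OFULRegret} can be invoked directly. It cannot. Even after the support has stabilized, Algorithm~\ref{alg:contHypOFUL} continues to draw random exploration arms with probability $1/\sqrt{t}$, and those arms (as well as all arms from earlier epochs) are folded into $\overline{\V}_t$ and $\cC_t$. Theorem~\ref{thm:OFULRegret} is proved for a sequence in which \emph{every} arm is chosen optimistically from the current confidence set; here the confidence sets are built from a strictly larger design matrix. The paper handles this by proving a separate ``Extended OFUL'' lemma: if $\V=\W+\Q$ with $\Q\succeq 0$ then $\|\x\|_{\V^{-1}}\le\|\x\|_{\W^{-1}}$, so the instantaneous-regret bound $r_t\le 2\sqrt{\beta_{t-1}}\|\x_t\|_{\overline{\V}_{t-1}^{-1}}$ is only \emph{improved} by the extra arms, and the $\sum_t\|\x_t\|^2_{\overline{\W}_{t-1}^{-1}}$ telescoping (over OFUL arms only) still goes through. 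This monotonicity step is the actual technical crux, not the growth of the feature space per se; your closing paragraph gestures at absorbing ``data gathered in smaller subspaces'' into the $\log_2(T/2)$ factor, but that does not address the interspersed random arms \emph{within} each post-discovery epoch, and without the $\|\x\|_{\V^{-1}}\le\|\x\|_{\W^{-1}}$ observation the per-epoch invocation of Theorem~\ref{thm:OFULRegret} is not justified.
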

 
In other words, with high probability, the regret of Algorithm~\ref{alg:contHypOFUL} (FF-OFUL) scales like $\tilde{O}(k \sqrt{T} + \frac{1}{p^2})$ by ignoring constants and logarithmic terms and using the taylor series expansion of $-\log (1-p)$. The three terms in the total regret come from the following events. Regret due to: 
\begin{enumerate}[leftmargin=0cm,itemindent=.5cm,labelwidth=\itemindent,labelsep=0cm,align=left]
\item Exploration to guarantee observing all the relevant features (with high probability).
\item Exploration after observing all relevant features (due to lack of knowledge of $p$ or $k$).
\item Exploitation and exploration running OFUL (after having observed all the relevant features).
\end{enumerate}

In practice, feature feedback may be noisy. Sometimes, features that are irrelevant may be marked as relevant. To account for this, we can relax our assumption to allow for subset of $k^\prime$ irrelevant features that are mistakenly marked as relevant. Including these features will increase the regret but the algorithm will still work and the theory goes through without much difficulty as stated in the following corollary. 

\begin{cor}
With the same assumptions as Theorem~\ref{thm:HybOFULRegret}, if we assume that a fixed set of $k^\prime$ irrelevant features were indicated by the user (Assumption~\ref{ass:nonoise}), then the regret of Algorithm~\ref{alg:contHypOFUL} (FF-OFUL) scales like $\tilde{O}((k+k^\prime) \sqrt{T} + \frac{1}{p^2})$.
\label{cor:noisy_feedback}
\end{cor}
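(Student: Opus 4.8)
The plan is to re-run the three-term regret decomposition established in the proof of Theorem~\ref{thm:HybOFULRegret} and to track exactly where the sparsity level $k$ enters, showing that only the OFUL term is affected by the spurious features while the discovery and forced-exploration terms are left intact. The crucial structural observation is that Algorithm~\ref{alg:contHypOFUL} itself is unchanged: the only effect of Assumption~\ref{ass:nonoise} with $k' > 0$ is that the running set of marked indices $\mathcal{R}_t$ may now contain, in addition to $\supp(\tstar)$, a fixed set of at most $k'$ irrelevant coordinates. Since this additional set is fixed, we have the deterministic bound $|\mathcal{R}_t| \leq k + k'$ for every $t$, regardless of the (unmodeled) rate at which the irrelevant features happen to be revealed.

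First I would argue that correctness of the confidence sets is preserved. Because the $k'$ extra coordinates are irrelevant, $\tstar$ restricted to $\mathcal{R}_t$ is just $\tstar$ restricted to $\supp(\tstar)\cap\mathcal{R}_t$ padded with zeros, so the target of the feature-restricted ridge regression in the enlarged subspace is still the correct linear predictor. Consequently the OFUL guarantee of Theorem~\ref{thm:OFULRegret}, applied in the subspace indexed by $\mathcal{R}_t$, still holds with ambient dimension $|\mathcal{R}_t| \leq k+k'$; the only change is that the effective dimension appearing in the confidence radius $f(\cdot)$ and in the $d\sqrt{t}$-type bound is now $k+k'$ rather than $k$.

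Next I would revisit each of the three terms. The first term, which bounds the regret incurred while guaranteeing (with high probability) that every \emph{relevant} feature has been observed, is driven solely by Assumption~\ref{ass:discoverability} and the coupon-collector-style count of random plays needed to uncover the $k$ coordinates of $\supp(\tstar)$, each found with probability at least $p$. Missing an irrelevant coordinate never produces linear regret, so there is no discovery requirement for the $k'$ spurious features and this term remains governed by $k$ and $p$, i.e.\ $\tilde O(1/p^2)$. The second term, arising from the forced exploration schedule $\epsilon_t = 1/\sqrt t$, is bounded using only the uniform instantaneous-regret bound $2SL$ and $\sum_{t\le T}\epsilon_t = O(\sqrt T)$; it is dimension-free and therefore unchanged. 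The third term is exactly the cumulative OFUL regret on the identified subspace, and by the previous paragraph it now scales as $\tilde O((k+k')\sqrt T)$. Summing the three contributions yields $R_T = \tilde O\big((k+k')\sqrt T + 1/p^2\big)$, as claimed: concretely one replaces $k$ by $k+k'$ in the OFUL term of the Theorem~\ref{thm:HybOFULRegret} bound and leaves the first two terms untouched.

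The main subtlety to check carefully -- and the step I expect to require the most care -- is the interaction between the two different roles played by $k$ and $k+k'$: the union bound over the failure events (feature discovery, the self-normalized martingale concentration, and the forced-exploration deviations) must be allocated so that the discovery event is phrased over the $k$ relevant coordinates while the OFUL concentration is phrased over the $(k+k')$-dimensional subspace. Since all of these appear only inside logarithmic factors, any reasonable split of $\delta$ preserves the stated $\tilde O$ rate, but one must still confirm that expanding the subspace mid-run (the recompute step taken when $\mathcal{I}_t$ is nonempty) does not invalidate the monotone determinant/volume argument underlying the OFUL bound. This is already handled in the proof of Theorem~\ref{thm:HybOFULRegret} for the growth of $\mathcal{R}_t$ from $0$ up to $k$ features, and the identical argument covers growth up to $k+k'$, which is why the theory ``goes through without much difficulty.''
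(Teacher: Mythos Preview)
Your proposal is correct and follows essentially the same approach as the paper, which simply observes that the exploration/discovery analysis is unaffected by the spurious features while the OFUL exploitation term now runs in a subspace of dimension at most $k+k'$. Your write-up is considerably more careful than the paper's one-line justification (in particular your remarks on the preserved validity of the confidence sets and the separate roles of $k$ versus $k+k'$ in the union bound), but the underlying argument is identical.
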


The corollary follows by observing that the exploration is not affected by this noise and the regret of exploitation on the weight vector restricted to the $k+k^\prime$ dimensions scales like $(k+k^\prime) \sqrt{T}$. This accounts for having some features being ambiguous and users erring on the side of marking them as relevant. This only results in slightly higher regret so long as $k+k^\prime$ is still smaller than $d$. One could improve this regret by making additional assumptions on the probabilities of feature selection to weed out the irrelevant features.

\subsubsection{Proof Sketch of Main Result} 

We provide a sketch of the proof here and defer the details to the appendix. Recall, the cumulative regret is summed over the instantaneous regrets for $t = 1,\dots, T$. We divide the cumulative regret across epochs $s = 0, \dots, M$ of doubling size $T_s = 2^s$ for $M = \log_2\frac{T}{2}$. 
\begin{figure}[h]
\centering \includegraphics[width=0.75\textwidth]{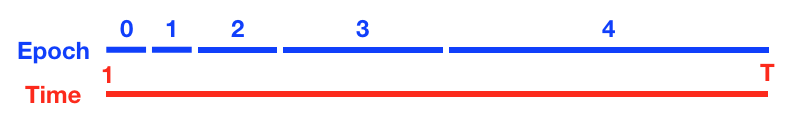} 
\caption{Time horizon divided into epochs of doubling size.}
\label{feedback}
\end{figure}

This ensures that the last epoch dominates the regret which gives the multiplicative factor of $\log_2\frac{T}{2}$. For each epoch, we bound the regret under two events, all relevant features have been identified (via user feedback) up to that epoch or not. First, we bound the regret conditioned on the event that all the relevant features have been identified in Lemma~\ref{lem:ff_epoch_regret}. This is further, in expectation, broken down into the $\epsilon_s$ portion of random actions for pure exploration (Lemma~\ref{lem:num_rand_arms}) and $1- \epsilon_s$ modified OFUL actions on the $k$-dimensional feature space for exploitation-exploration (Lemma~\ref{lem:oful_extended}). For the pure exploration part, we use the worst case regret bound but since $\epsilon_s$ is decreasing this does not dominate the OFUL term. Second, we bound the probability that some of the relevant features are not identified so far (Proposition~\ref{prop:prob_of_no_obs}), which is a constant depending on $k$ and $p$ since it becomes zero after enough epochs have passed. We need pure exploration to ensure the probability that some features are not identified decreases with each passing epoch.  The regret in this case is bounded with worst case regret. 

A subtle issue of bounding regret of the actions selected by the OFUL subroutine is that, unlike OFUL, the confidence sets in our algorithm are constructed using additional actions from exploration rounds and past epochs. To accommodate this we prove a regret bound for this variation in Lemma~\ref{lem:oful_extended}. Putting all this together gives us the final result.

\textbf{Lower bound.}
We can use the arguments from~\citep{dani2008stochastic, rusmevichientong10linearly} to get a lower bound of $O(k \sqrt{T})$. To see this, assume that we know the support. Then any linear bandit algorithm that is run on that support must incur an order $k \sqrt{T}$ regret. We don't know the support but we estimate it with high probability and therefore the lower bound also applies here. Our algorithm is optimal up to log factors in terms of the dimension.

\subsection{Better Early-Regret Bounds}\label{sec.early_regret}

In our analysis, we bound the regret in the rounds before observing all relevant features with the worst case regret. This may be too pessimistic in practice. We present some results to support the idea of restricting the feature space in the short-term horizon and growing the feature space over time. The results also suggest that an additional assumption on the behavior of early-regret could lead to better constants in our bounds. Any linear bandit algorithm restricted to the support of $\tstar$ must incur an order $k \sqrt{T}$ regret so one can only hope to improve the constants of the bound. 

Figure~\ref{fig:sub_inds_synth}(a) shows that the average regret of pure exploration has a worse slope than that of OFUL restricted to a subset of the relevant features. We randomly sampled $N=1000$ actions from the unit sphere in $d = 40$ dimensions and generated $\theta_{\ast}$ with $k = 5$ sparsity. The only regret bound one can derive for a pure exploration algorithm that picks actions uniformly at random, independent of the problem instance, is a worst-case cumulative regret bound of $2SLT$. Let $R^{\mathcal{K}}_{alg}$ be the expected regret of algorithm $alg$ run on the subset of relevant features $\mathcal{K}\subseteq \{1, \ldots, k\}, \lvert \mathcal{K} \rvert = j \leq k$. For example, $alg$ could be the OFUL algorithm. Then $R^{\mathcal{K}}_{alg}$ represents the expected regret of OFUL only restricted to features in $\mathcal{K}$. 
Suppose the explore-then-commit algorithm first explores for roughly $\sqrt{T}$ time instances to discover relevant features ($\mathcal{K}$) followed by an exploitation stage such as OFUL only restricted to features in $\mathcal{K}$. The rewards in the exploitation stage can be divided in two parts, 
\begin{align*}
\left\langle \x, \tstar  \right\rangle =  \left\langle \x^\mathcal{K}, \tstar^\mathcal{K}  \right\rangle  +  \left\langle \x^{\mathcal{K}^c}, \tstar^{\mathcal{K}^c}  \right\rangle,
\end{align*}
where $\x^\mathcal{K}$ is the portion of $\x$ restricted to $\mathcal{K}$ and $\mathcal{K} \cup \mathcal{K}^c = [p]$. Similarly, the regret $R^{\mathcal{K}}_{alg}$ can be divided in two parts. Roughly the regret on $\mathcal{K}$ can be bounded by $j\sqrt{T}$ under certain conditions using the OFUL regret bound. For the regret on $\mathcal{K}^c$, suppose each relevant component of $\tstar$ has a mean square value of $S^2/k$ (for example, this can be achieved with a sparse gaussian model such as those described in \citet{deshpande2012linear}). This yields $\E \|\tstar^{\mathcal{K}^c}\|^2 \approx \frac{k-j}{k} S^2$ where $j = \lvert \mathcal{K} \rvert$. The worst-case instantaneous regret bound on $\mathcal{K}^c$ becomes $2 \sqrt{\frac{k-j}{k}}SL$ leading to an improvement in the slope of linear regret by a factor of $\sqrt{\frac{k-j}{k}}$ over pure exploration as seen in Figure~\ref{fig:sub_inds_synth}. 

\begin{figure*}[t!]
  \centering
  \begin{minipage}[b]{0.45\textwidth}
  \centering
    \includegraphics[width=0.95\textwidth]{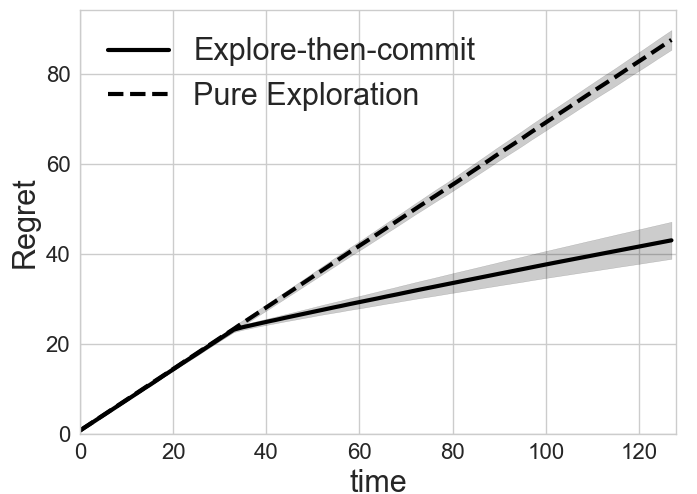}
    \centerline{\small{(a) Synthetic data}}
  \end{minipage}
  \begin{minipage}[b]{0.45\textwidth}
  \centering
    \includegraphics[width=0.95\textwidth]{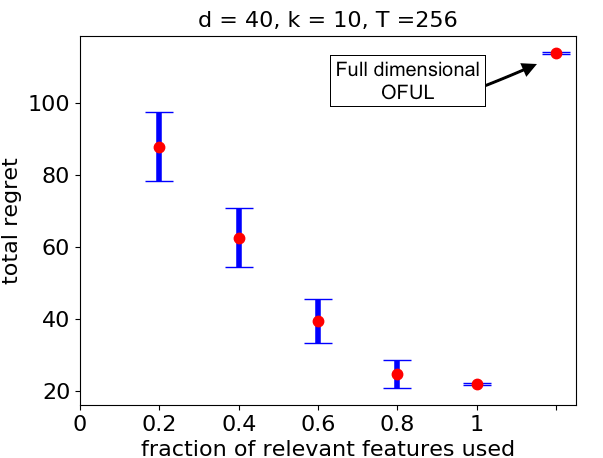}
    \centerline{\small{(b) Synthetic data}}
  \end{minipage}
  \begin{minipage}[b]{0.95\textwidth}
  \centering
    \includegraphics[width=0.5\textwidth]{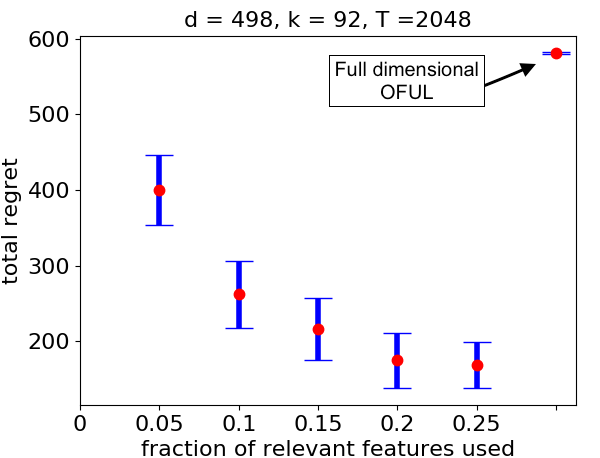}
    \centerline{\small{(c) Data from~\citet{poulis2017learning}}}
  \end{minipage}
   \caption{(a) Regret of pure exploration versus an explore-then-commit strategy (b,c) Average regret of OFUL restricted to feature subsets (red dots) with 95\% confidence regions (blue lines). It is important to note that a short time horizon was chosen to make the case for restricting the feature space in early rounds. In the long horizon, with more information, the relative performance of OFUL would improve, but would ultimately be a factor of $d/k$ worse than that of the low-dimensional model that includes all $k$ relevant features. } \label{fig:sub_inds_synth}
\end{figure*}

Figure~\ref{fig:sub_inds_synth}(b) shows the average regret of OFUL restricted to feature subsets of different sizes with synthetic data with $N=1000$ actions, $d = 40$ and $k = 10$. For $j \in \{2, 4, \dots, 10\}$, we randomly picked $100$ subsets of size $j$ from the support of $\tstar$. We report the average regret of OFUL for a short horizon, $T = 2^8$, restricted to $100$ random subsets. We also plot average regret of OFUL on the full $d = 40$ dimensional data. Figure~\ref{fig:sub_inds_synth}(c) depicts the same with real data from~\cite{poulis2017learning} with $d = 498$ and sparsity, $k = 92$, we choose $100$ random subsets of size $j \in \{5, 10, \dots, 25\}$ from the set of relevant features marked by users (see Section~\ref{sec:experiments} for more details). We report the average regret of OFUL restricted to the features from $100$ random subsets for a relatively short time horizon, $T = 2^{11}$.

The plots show that, in the short horizon, it may be more beneficial to use a subset of the relevant features than using the total feature set which may include many irrelevant features. The intuition is that when OFUL has not seen many samples, it does not have enough information to separate the irrelevant dimensions from relevant ones. As time goes on (i.e., for longer horizons) OFUL's relative performance improves since it enjoys sublinear regret but would ultimately be a factor of $d/k$ worse than that of the low-dimensional model that includes all $k$ relevant features. 

\section{Experiments}\label{sec:experiments}

In this section, we demonstrate the performance of our algorithm with synthetic and real human-labeled data. 

\subsection{Results with Synthetic Data}\label{sec:synthetic}
For the synthetic dataset, we simulate a text categorization dataset as follows. Each action corresponds to an article. Generally an article contains only a small subset of the words from the dictionary. Therefore, to simulate documents we generate $1000$ sparse actions in $40$ dimensions. A $5$-sparse reward generating vector, $\tstar$, is chosen at random. This is representative of the fact that in reality a document category probably contains only a few relevant words. The features represent word counts and hence are always positive. Here we have access to $\tstar$ therefore for any action $\x$, we use the standard linear model (\ref{eqn:reward}) for the reward $y_t$ with $\eta_t \sim \mathcal{N}(0, R^2)$. The support of $\tstar$ is taken as the set of oracle relevant words. For every round, each word from the intersection of the support of the action and oracle relevant words is marked as relevant with probability $p ( = 0.1)$. Figure~\ref{fig:ff_synth_res}(a) shows the results of an average of 100 random trials where $\tstar$ is sparse with $k = 5, d = 40$, with $1000$ actions. As expected, the FF-OFUL algorithm outperforms standard OFUL significantly. Figure~\ref{fig:ff_synth_res}(b) also shows that the feedback does not hurt the performance much for non-sparse $\tstar$ with $k = d = 40$. Figure~\ref{fig.two_stage_comparison} compares the performance of FF-OFUL with an explore-then-commit strategy.

\begin{figure*}[t!]

\begin{minipage}[b]{0.5\linewidth}
  \centering
  \centerline{\includegraphics[width=0.95\linewidth]{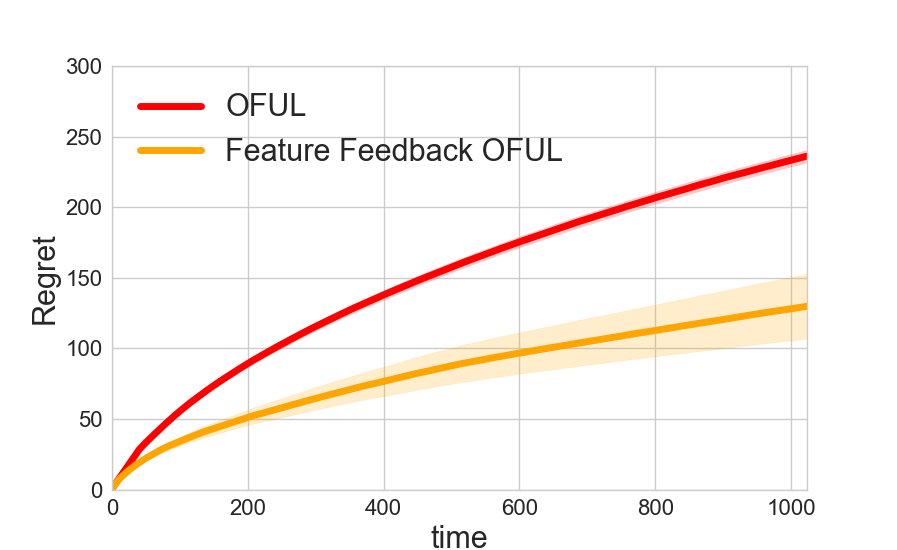}}
  \centerline{(a) Simulated data with sparse $\tstar$.}\medskip
\end{minipage}
\begin{minipage}[b]{0.5\linewidth}
  \centering
  \centerline{\includegraphics[width=0.95\linewidth]{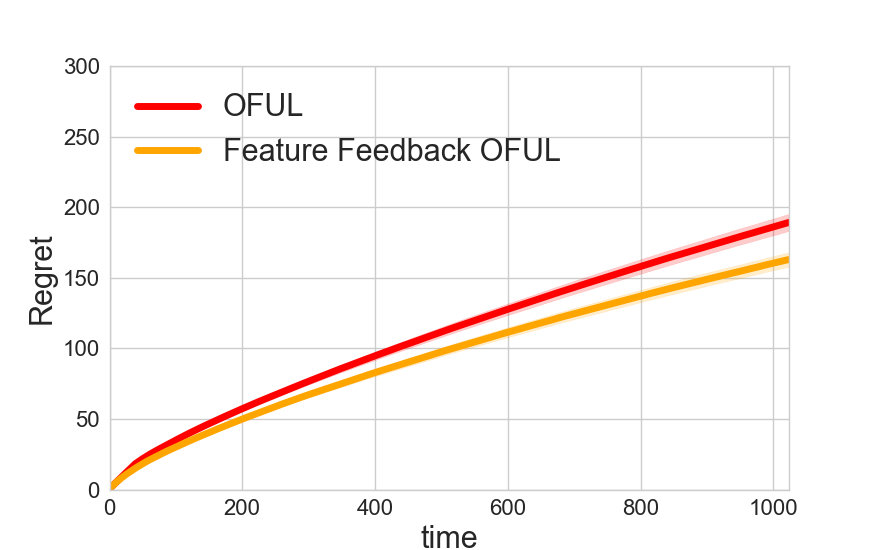}}
  \centerline{(b) Simulated data with dense $\tstar$.}\medskip
\end{minipage}
\caption{ On simulated data ($d = 40$) \textbf{(a)} with sparse $\tstar$ ($k = 5$), the algorithm using Feature Feedback outperforms OFUL significantly and \textbf{(b)} with dense $\tstar$ ($k = d = 40$), Feature Feedback does not hurt the performance and it is close to standard OFUL.\label{fig:ff_synth_res} Refer introduction for explore-then-commit comparison on synthetic data.}
\end{figure*}

\begin{figure*}[t!]
\begin{minipage}[b]{0.5\linewidth}
  \centering
  \centerline{\includegraphics[width=0.95\linewidth]{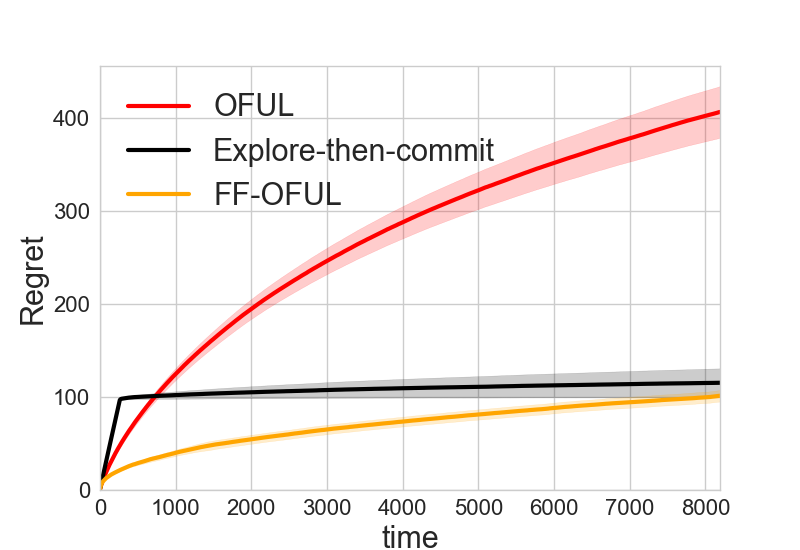}}
\centerline{(a) with replacement}\medskip
\end{minipage}
\begin{minipage}[b]{0.5\linewidth}
  \centering
  \centerline{\includegraphics[width=0.95\linewidth]{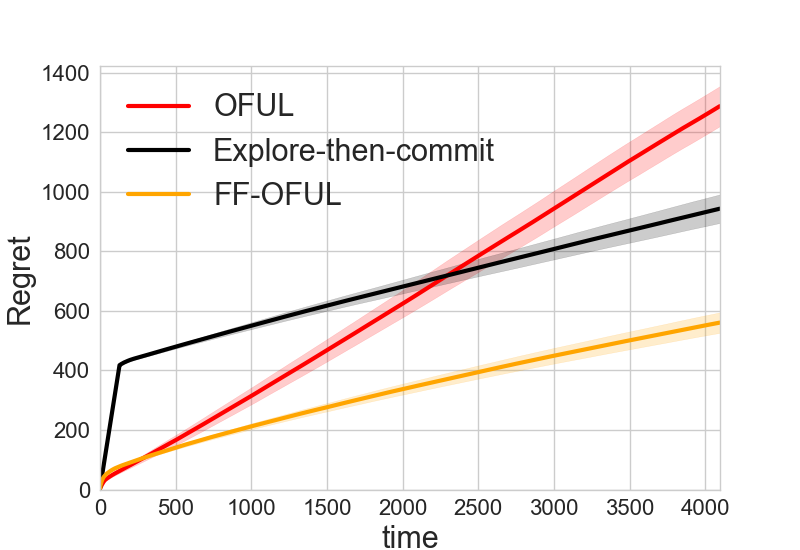}}
\centerline{(b) without replacement}\medskip
\end{minipage}
\caption{ Newsgroup dataset with oracle feedback: This plot shows that FF-OFUL outperforms OFUL and an Explore-then-commit strategy when running in $d = 1000$ dimensions, (Left) sampling actions with replacement using binary rewards model. \label{fig:ff_NG1000} (Right) sampling actions without replacement and using the numerical reward model. Smallest $T_0$ selected such that all relevant features are marked with high probability. Note that $T$ must be less than the number of actions for without replacement sampling hence the shorter time horizon.}
\end{figure*}

\begin{figure*}[t!]
\begin{minipage}[b]{0.5\linewidth}
  \centering
  \centerline{\includegraphics[width=\linewidth]{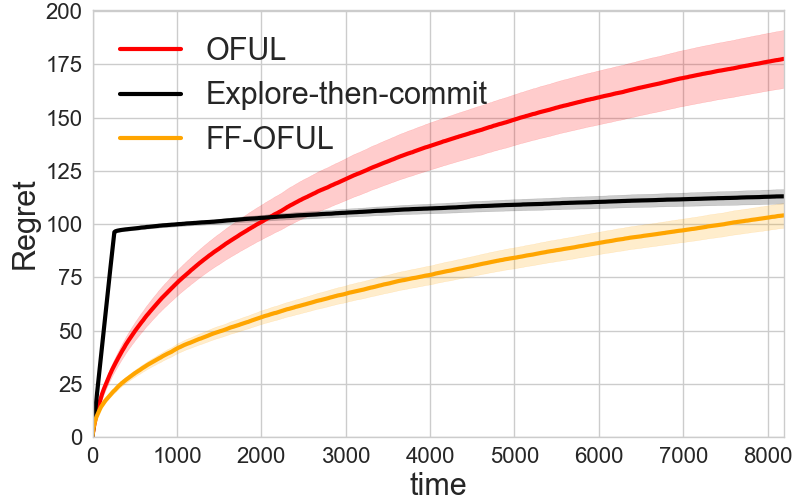}}
\centerline{(a)}\medskip
\end{minipage}
\begin{minipage}[b]{0.5\linewidth}
  \centering
  \centerline{\includegraphics[width=\linewidth]{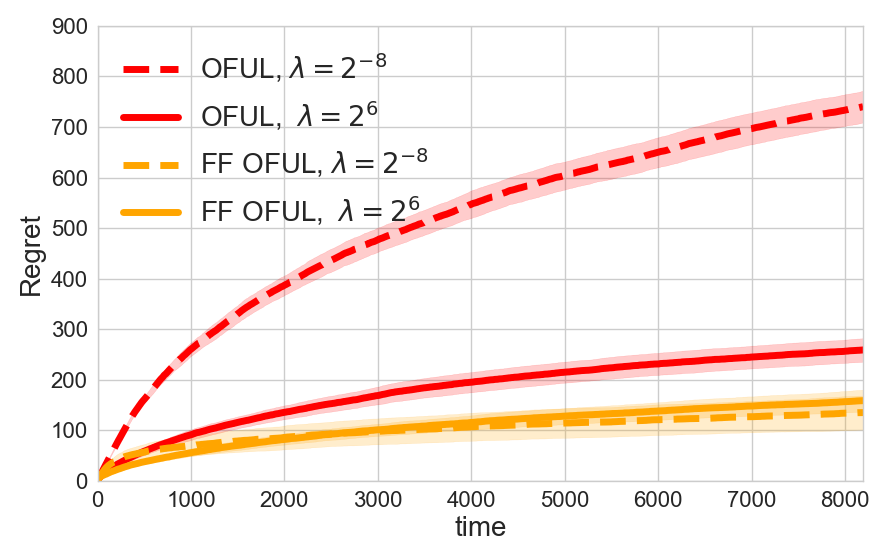}}
 \centerline{(b)}\medskip
\end{minipage}
\begin{minipage}[b]{0.99\linewidth}
  \centering
  \centerline{\includegraphics[width=0.5\linewidth]{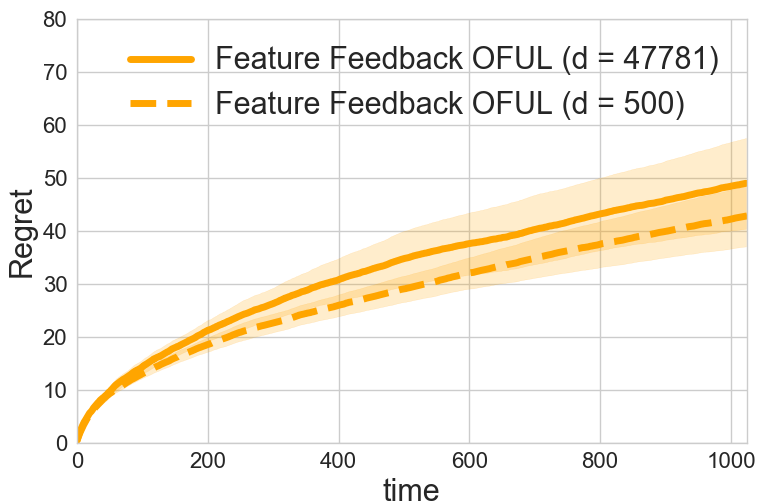}}
\centerline{(c)}\medskip
\end{minipage}
\caption{Newsgroup Dataset with Human Feedback: \textbf{(a)} This plot shows that FF-OFUL outperforms OFUL and an Explore-then-commit strategy in $d = 500$ dimensions. Both plots were generated by tuning the parameter for OFUL.\label{fig:ff_dasgupta500} \textbf{(b)}  Sensitivity to the tuning parameter $\lambda$ is seen by the drastic difference in performance of OFUL. In contrast, our proposed algorithm has a relatively modest difference in performance showing its robustness to the ridge regression parameter $\lambda$. \label{fig:ff_tuning_comparison} \textbf{(c)} Our algorithm for $d=47781$ and $d=500$ with the same ridge parameter $\lambda=1$, showing its robustness to changes in dimensions and tuning. \label{fig:ff_dasgupta_full}}
\end{figure*}

\subsection{Results with 20Newsgroup Dataset}
For real data experiments, we use the 20Newsgroup dataset from \cite{lang1995newsweeder}. It has $20,000$ documents covering $20$ topics such as politics, sports. We choose a subset of 5 topics (misc.forsale, rec.autos, sci.med, comp.graphics, talk.politics.mideast) with approximately 4800 documents posted under these topics. For the word counts, we use the TF-IDF features for the documents which give us approximately $d = 47781$ features. For the sake of comparing our method with OFUL, we first report $500$ and $1000$ dimensional experiments and then on the full $47,781$ dimensional data. To do this, we use logistic regression to train a high accuracy sparse classifier to select $153$ features. Then select an additional $847$ features at random in order to simulate high dimensional features. We compared OFUL and FF-OFUL algorithms on this data. This is similar to the way \citet{poulis2017learning} ran experiments in the classification setting. We ran only our algorithm on the full $47781$ dimension data since it was infeasible to run OFUL. For the reward model, we pick one of the articles from the database at random as $\tstar$ and  the linear reward model in (\ref{eqn:reward}) or use the labels to generate binary, one vs many rewards to simulate search for articles from a certain category. In order to come close to simulating a noisy setting, we used the logistic model, with $q_t = 1/(1 - \exp(-\la \x_t, \tstar \ra), P(y_t = +1) = q_t$.

\subsubsection{Oracle Feedback.} We used the support of the one vs many sparse logistic regression to get an ``oracle set of relevant features" for each class. Each word from the intersection of the support of the action and oracle relevant words was marked as relevant with probability $p ( = 0.1)$. There were about $k \in (30, 100)$ relevant features for each category. Figure~\ref{fig:ff_NG1000}, shows the performance of OFUL, Explore-then-commit and FF-OFUL on the Newsgroup dataset with oracle feedback. In these simulations averaged over $100$ random $\tstar$, FF-OFUL outperforms OFUL and Explore-then-commit significantly. OFUL parameter was tuned to $\lambda = 2^8$.  

\subsubsection{Human Feedback.}  \citet{poulis2017learning} took $50$ of the 20Newsgroup articles from $5$ categories and had users annotate relevant words. These are the same categories that we used in the Newsgroup20 results. This is closer to simulating human feedback since we are not using sparse logistic regression to estimate the sparse vectors. We take the user indicated relevant words instead as the relevance dimensions. . There were $k \in (30, 100)$ relevant features for each category. In Figure~\ref{fig:ff_dasgupta500}(a), we can see that  FF-OFUL is already outperforming OFUL and Explore-then-commit. This is despite the fact that it is not a very sparse regime. Surprisingly, we found that tuning had little effect on the performance of  FF-OFUL whereas it had a significant effect on OFUL (see Figure~\ref{fig:ff_tuning_comparison}). We believe that this behavior is due to the gradual growth in the number of relevant dimensions as we receive new feedback therefore implicitly regularizing the number of free parameters. FF-OFUL also yields significantly better rewards at early stages by exploiting knowledge of relevant features as soon as they are identified, rather than waiting until all or most are found.

\subsubsection{Parameter Tuning.}
For OFUL we tune the ridge parameter ($\lambda$) in the range $\{2^{-7}, 2^{-6}, \ldots, 2^{10} \}$ to pick the one with best performance. All the tuned parameters that were selected for OFUL were strictly inside this range. For $d=40, k = 5$ and $k = 40$, $\lambda_{OFUL} = 2^{-5}$. For $d = 1000$ (Newsgroup), $\lambda_{OFUL} = 2^8$. Figure~\ref{fig:ff_tuning_comparison}(b) demonstrates the sensitivity of OFUL to change in tuning parameter. For FF-OFUL, the remarkable feature is that it does not require parameter tuning so $\lambda = 1$ for all experiments.

\subsubsection{Full dimension experiments.} Remarkably the performance of our algorithm barely drops in full ($d = 47781$) feature dimensions as seen in Figure~\ref{fig:ff_dasgupta_full}(c). It is important to note that the ridge regression parameter ($\lambda$) for all the experiments was set to $\lambda = 1$ and was not tuned. FF-OFUL is robust to changes in the ambient dimensions and the parameter $\lambda$. Recall that we do not compare the results with OFUL on $47781$ dimensional data since it would require storing and updating a $47781 \times 47781$ matrix at each stage.

\section{Conclusion}

In this paper we provide an algorithm that incorporates feature feedback in addition to the standard reward feedback. We would like to underline that since this algorithm incrementally grows the feature space, it makes it possible to use the new algorithm in high-dimensional settings where conventional linear bandits are impractical and also makes it less sensitive to the choice of tuning parameters. This behavior could be beneficial in practice since tuning bandit algorithms could be sped up. In the future, it might prove fruitful to augment the feature feedback provided by the user with ideas from compressed sensing to facilitate faster recognition of relevant features.

\bibliography{main}
\bibliographystyle{apalike}

\appendix

\section{Feature Feedback Epoch OFUL}
This second algorithm, Feature Feedback Epoch OFUL (Algorithm~\ref{alg:epochHybOFUL}), is an epoch version of Algorithm~\ref{alg:contHypOFUL} which runs in epochs of doubling length so the last epoch dominates the regret. It is essentially the same as Algorithm~\ref{alg:contHypOFUL} written in a different format which facilitates proving the main result. The main difference in the algorithms is the choice of $\epsilon_t$ depicted in Figure~\ref{fig:epsilon_greedy}.

\begin{algorithm}[H]
{\small
  \begin{algorithmic}[1]
    \STATE Let the set of relevant indices, $\mathcal{R}_0, \mathcal{I}_0 = \{\}$.
    \WHILE{$\mathcal{I}_0 $ is empty}
    	\STATE Pull arm at random, $\mathcal{I}_0 = \{$ indices revealed $\} $
    \ENDWHILE
    \STATE $\mathcal{R}_{1} = \mathcal{R}_0 \bigcup \mathcal{I}_0$
    \STATE Initialize $\cC_0$.
    \FOR {$s=1,2, \ldots, M-1$}
     \STATE Set $\epsilon_s = c/\sqrt{2^s}$
       \STATE Let $\mX_s$ be the original feature matrix with only the features in $\mathcal{R}_s$.
      \FOR {$t = 1, \ldots, 2^s$}    
      \STATE Draw $b_t$ from bernoulli$(\epsilon_s)$
        \IF{$b_t = 1$}
        \STATE Pick an arm $\x_t$ uniformly at random from $\mathcal{X}$, 
        \ELSE 
        \STATE Pick arm $\x_t$ such that $(\x_t, \tilde{\tta}_t) = \argmax_{(\x, \tta) \in \cX_t \times \cC_{t-1}} \la \x, \tta \ra$
        \ENDIF
        \STATE Play arm $\x_t$ to observe reward $y_t$ and indices revealed for this arm, $\mathcal{I}_t$.
         \STATE Update $\mathcal{R}_{s} = \mathcal{R}_s \bigcup \mathcal{I}_t$
        \IF{$\mathcal{I}_t$ is empty }
        \STATE Rank one update to OFUL confidence set $\cC_{t}$ using $(y_t, \x_t)$
        \ELSE
        \STATE Update $\mX_s$ with features in $\mathcal{R}_s$
        \STATE Recompute the OFUL confidence set $\cC_{t}$ with new feature set $\mX_s$. 
        \ENDIF
           \ENDFOR 
     \STATE  $\mathcal{R}_{s+1} = \mathcal{R}_s$
     \STATE $\cC_1 = \cC_{t}$
    \ENDFOR
  \end{algorithmic}
  \caption{Feature Feedback Epoch OFUL}
  \label{alg:epochHybOFUL}
}
\end{algorithm}

\begin{figure}[H]
\centering
\includegraphics[width= 0.5\linewidth]{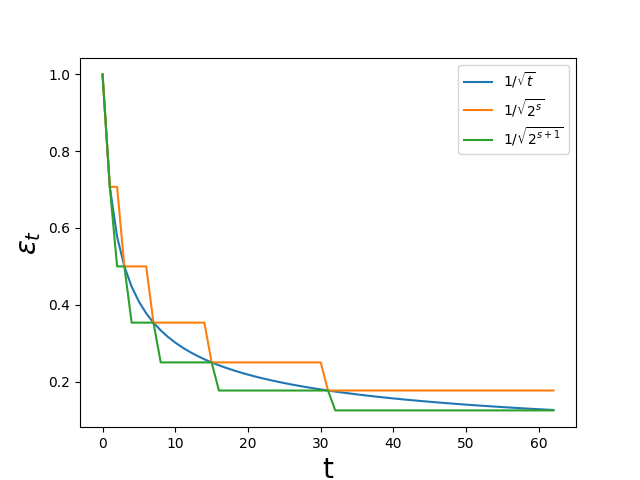}
\caption{Choice of $\epsilon_t$ for both the algorithms. $s = \floor{\log_2 t}$ Recall that $\epsilon_t$. controls the number of pure exploration steps in the algorithms.}
\label{fig:epsilon_greedy}
\end{figure}

\section{Proof of Theorem~\ref{thm:HybOFULRegret}}

We begin by proving intermediate results for three different events followed by the proof details. 
\begin{enumerate}
	\item The number of times we pull a random arm during an epoch is close to its expectation.
	\item We have seen all the relevant arms before the current epoch.
	\item Modified OFUL regret bound using arms from both exploration and exploitation.
\end{enumerate}

\subsection{Bounding the number of times we pull a random arm}
\begin{lem}
During epoch $s$, there are $T_s = 2^s$ time steps. Let $N_s$ be the number of random arm pulls during epoch $s$. Given that the probability of pulling a random arm during epoch $s$ is $\epsilon_s = c/\sqrt{T_s}$, then for any $\delta_1 > 0$:
\begin{align*}
\Pr \left( \lv N_s - c \sqrt{T_s} \rv \geq \sqrt{\frac{T_s}{2} \log \frac{2}{\delta_1}} \right) \leq \delta_1
\end{align*}
\label{lem:num_rand_arms}
\end{lem}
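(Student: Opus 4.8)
The plan is to recognize $N_s$ as a sum of independent bounded indicators and apply a standard Hoeffding bound; the centering constant $c\sqrt{T_s}$ in the statement is exactly its mean, so the claim reduces to a clean two-sided concentration estimate.

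First I would observe that within epoch $s$ the exploration probability $\epsilon_s = c/\sqrt{T_s}$ is a fixed deterministic quantity depending only on $s$, and that at each of the $T_s$ steps the algorithm draws $b_t \sim \Bern(\epsilon_s)$ as an independent coin flip, separate from which arm is ultimately played. Hence $N_s = \sum_{t=1}^{T_s} b_t$ is a sum of i.i.d.\ $\Bern(\epsilon_s)$ random variables, and $\bbE[N_s] = T_s \epsilon_s = T_s \cdot c/\sqrt{T_s} = c\sqrt{T_s}$. This matches the quantity appearing inside the absolute value, so it remains only to control the deviation of $N_s$ from its mean.

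Next I would invoke Hoeffding's inequality. Each $b_t$ takes values in $[0,1]$, so the sum of squared ranges over the $T_s$ summands equals $T_s$, giving for any $\tau > 0$
\[
\Pr\left( \lv N_s - c\sqrt{T_s} \rv \geq \tau \right) \leq 2 \exp\left( - \frac{2\tau^2}{T_s} \right).
\]
Finally, setting $\tau = \sqrt{\tfrac{T_s}{2} \log \tfrac{2}{\delta_1}}$ makes the exponent equal to $-\log(2/\delta_1)$, so the right-hand side becomes $2 \cdot (\delta_1/2) = \delta_1$, which is precisely the stated bound.

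There is essentially no obstacle here: the result is a one-line consequence of Hoeffding once the summands are identified as i.i.d.\ bounded indicators, and the constants were evidently chosen so that the exponent collapses exactly to $\log(2/\delta_1)$. The only point I would state explicitly is the independence of the draws $b_t$, which is guaranteed because $\epsilon_s$ is held fixed for the whole epoch and the Bernoulli coin is flipped independently of the history and of the subsequent arm choice. (Were $\epsilon_t$ to vary within the epoch, one would fall back on the general Hoeffding bound for non-identically distributed bounded variables, but the piecewise-constant schedule keeps everything i.i.d.\ and makes the bound exact.)
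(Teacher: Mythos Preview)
Your proposal is correct and follows exactly the paper's approach: recognize $N_s$ as a sum of $T_s$ i.i.d.\ $\Bern(\epsilon_s)$ variables with mean $c\sqrt{T_s}$ and apply Hoeffding's inequality. The paper's proof is a two-sentence sketch of precisely this argument; your version simply fills in the Hoeffding computation explicitly.
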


\begin{proof}
We can see $N_s$ as the sum of $T_s$ i.i.d. Bernoulli random variables with probability of success of $\epsilon_s$. It is easy to see that $\E N_s = T_s \cdot c/\sqrt{T_s} = c\sqrt{T_s}$. Finish by applying the Hoeffding's inequality to the sum of the Bernoulli random variables.
\end{proof}

\begin{cor}
With probability $\geq 1 - \delta_1$:
\begin{align*}
\sqrt{\frac{T_s}{2} \log \frac{2}{\delta_1}} \leq N_s \leq 3 \sqrt{\frac{T_s}{2} \log \frac{2}{\delta_1}}
\end{align*}
\label{cor:num_rand_pulls}
\end{cor}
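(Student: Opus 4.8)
The plan is to read off the two-sided bound as an immediate consequence of the concentration statement in Lemma~\ref{lem:num_rand_arms}, after calibrating the constant $c$ that governs the exploration rate $\epsilon_s = c/\sqrt{T_s}$. Write $B \defeq \sqrt{\tfrac{T_s}{2}\log\tfrac{2}{\delta_1}}$ for the deviation half-width appearing in the lemma, so that Lemma~\ref{lem:num_rand_arms} asserts $\Pr\!\left(|N_s - c\sqrt{T_s}| \geq B\right) \leq \delta_1$; equivalently, with probability at least $1 - \delta_1$ we have $c\sqrt{T_s} - B \leq N_s \leq c\sqrt{T_s} + B$.

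The key step is to choose $c$ so that the mean $\E N_s = c\sqrt{T_s}$ equals exactly twice the half-width $B$. Since $2B = \sqrt{2 T_s \log(2/\delta_1)}$, this forces $c = \sqrt{2\log(2/\delta_1)}$, the value used in the epoch schedule. With this choice $c\sqrt{T_s} = 2B$, and substituting into the two-sided event from Lemma~\ref{lem:num_rand_arms} yields, with probability at least $1 - \delta_1$,
\begin{align*}
B \;=\; 2B - B \;\leq\; N_s \;\leq\; 2B + B \;=\; 3B,
\end{align*}
which is precisely the claimed bound. The boundary cases are harmless, since one may replace the strict inequalities of the complementary (bad) event by non-strict ones without changing the probability.

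There is no genuine obstacle here: the corollary is a one-line rearrangement of Lemma~\ref{lem:num_rand_arms}. The only point requiring care is the calibration $c = \sqrt{2\log(2/\delta_1)}$, which is exactly what places the mean at the midpoint $2B$ of the interval $[B, 3B]$; any other constant would shift the window and break one of the two inequalities. I would therefore state this choice of $c$ explicitly so that it stays consistent with its appearance in the rate $\epsilon_s = c/\sqrt{T_s}$ and in the statement of Lemma~\ref{lem:num_rand_arms}.
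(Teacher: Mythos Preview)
Your proposal is correct and matches the paper's proof exactly: the paper's argument is simply ``take $c = \sqrt{2\log(2/\delta_1)}$ in Lemma~\ref{lem:num_rand_arms},'' which is precisely your calibration $c\sqrt{T_s} = 2B$ followed by the rearrangement $2B \pm B = [B,3B]$. You have spelled out the one-line computation the paper leaves implicit, but the approach is identical.
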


\begin{proof}
This is a simple consequence of taking $c = \sqrt{2 \log \frac{2}{\delta_1}}$ in Lemma~\ref{lem:num_rand_arms}.
\end{proof}

\subsection{Probability of having identified all the relevant arms}
\begin{prop}
Let $\alpha_0 = \sqrt{2}$ and $\alpha_i = \sqrt{\frac{T_s}{2}} = \sqrt{2^{i-1}}$ for $i > 0$. Then:
\begin{align*}
\sum_{i=0}^{s-1} \alpha_i \geq \sqrt{2^s}
\end{align*}
\label{conj:sum_rand}
\end{prop}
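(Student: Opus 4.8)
The plan is to prove the inequality by induction on $s$, which matches the doubling-epoch structure of the algorithm far better than a direct summation. First I would pin down the reading of the $\alpha_i$: for $i>0$ the defining relation is $\alpha_i=\sqrt{T_i/2}=\sqrt{2^{i-1}}$ (the subscript in the statement is the running epoch index), while $\alpha_0=\sqrt 2$ is supplied separately and is strictly larger than what the formula $\sqrt{2^{i-1}}$ would give at $i=0$. This extra slack at the base is precisely the contribution of the initial while-loop of the algorithm, and it is what makes the claim hold (with equality at $s=1$) rather than fail.

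For the base case $s=1$ the sum is just $\alpha_0=\sqrt 2=\sqrt{2^1}$, so the bound holds with equality. For the inductive step I would assume $\sum_{i=0}^{s-1}\alpha_i\ge\sqrt{2^s}$ and peel off the single new summand $\alpha_s=\sqrt{2^{s-1}}$:
\[
\sum_{i=0}^{s}\alpha_i=\Bigl(\sum_{i=0}^{s-1}\alpha_i\Bigr)+\alpha_s\ \ge\ \sqrt{2^s}+\sqrt{2^{s-1}}.
\]
It then remains to verify $\sqrt{2^s}+\sqrt{2^{s-1}}\ge\sqrt{2^{s+1}}$, which after factoring out $\sqrt{2^{s-1}}$ reduces to the numerical inequality $1+\sqrt 2\ge 2$, i.e.\ $\sqrt 2\ge 1$. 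This closes the induction.

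The only real subtlety is the bookkeeping in the first paragraph: correctly interpreting the indices and noticing that $\alpha_0$ does \emph{not} obey the general formula, since naively substituting $i=0$ into $\sqrt{2^{i-1}}$ gives $1/\sqrt 2$, which would make the estimate collapse. Once the terms are fixed, the analytic content is the single observation $1+\sqrt 2\ge 2$, and no sharper estimate is available because the inequality is tight at $s=1$. As a cross-check I would note that one can instead sum the geometric series in closed form, $\sqrt 2+\sum_{j=0}^{s-2}(\sqrt 2)^{j}=\sqrt 2+\frac{(\sqrt 2)^{\,s-1}-1}{\sqrt 2-1}$, and verify directly that it dominates $(\sqrt 2)^{s}$; after clearing the positive factor $\sqrt 2-1$ this again collapses to $(\sqrt 2)^{\,s-1}\ge 1$. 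I would keep the induction as the primary argument and relegate the closed-form computation to a remark.
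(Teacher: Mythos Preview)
Your induction argument is correct: the base case $s=1$ gives equality, and the inductive step reduces to $\sqrt{2}+1\ge 2$, exactly as you say. The bookkeeping on $\alpha_0$ versus the general formula is also right and worth flagging, since without that extra slack the claim would fail.

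There is nothing to compare against: the paper states this proposition without proof (note the internal label \texttt{conj:sum\_rand}, suggesting it was at some point left as a conjecture or exercise). Your argument cleanly fills the gap. The closed-form cross-check via the geometric series is a nice sanity check, though the algebra after clearing $\sqrt{2}-1$ is slightly longer than ``collapses to $(\sqrt{2})^{s-1}\ge 1$'' suggests; if you include it as a remark, either spell out the two or three lines or drop it, since the induction already suffices.
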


\begin{prop}
The number of random arms pulled before an epoch $s$ can be bounded as:
\begin{align*}
\sqrt{2^s \log \frac{2}{\delta_1}} \leq \sum_{i=0}^{s-1} N_i \leq 3\sqrt{2^s \log \frac{2}{\delta_1}}
\end{align*} 
with probability $\geq 1 - s \delta_1$. 
\end{prop}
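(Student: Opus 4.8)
The plan is to reduce the statement to a per-epoch application of Corollary~\ref{cor:num_rand_pulls}, followed by a union bound over the epochs $i = 0, \dots, s-1$ and the evaluation of a geometric series. Writing $\alpha_i \defeq \sqrt{T_i/2} = \sqrt{2^{i-1}}$ (with the convention $\alpha_0 = \sqrt{2}$ coming from the special initialization epoch), Corollary~\ref{cor:num_rand_pulls} reads, for each fixed $i$,
\begin{align*}
\alpha_i \sqrt{\log(2/\delta_1)} \;\leq\; N_i \;\leq\; 3\,\alpha_i \sqrt{\log(2/\delta_1)}
\end{align*}
with probability at least $1-\delta_1$, so the whole claim will follow once these bounds are summed on a suitable high-probability event.

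First I would take a union bound: each of the $s$ events above fails with probability at most $\delta_1$, so all of them hold simultaneously with probability at least $1 - s\delta_1$, which is precisely the probability asserted by the Proposition. Everything that follows is deterministic, conditioned on this good event.

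Next, for the \emph{lower} bound I would sum the left-hand inequalities, factor out the common $\sqrt{\log(2/\delta_1)}$, and invoke Proposition~\ref{conj:sum_rand}, which states $\sum_{i=0}^{s-1}\alpha_i \geq \sqrt{2^s}$; this immediately gives $\sum_{i=0}^{s-1} N_i \geq \sqrt{2^s \log(2/\delta_1)}$. For the \emph{upper} bound I would instead sum the right-hand inequalities, reducing the task to bounding the geometric series $\sum_{i=0}^{s-1}\alpha_i$ from above. Since the common ratio is $\sqrt{2}>1$, the series is controlled by a constant multiple of its largest term $\alpha_{s-1}\asymp\sqrt{2^s}$, so I would bound $\sum_{i=0}^{s-1}\alpha_i$ by a constant times $\sqrt{2^s}$ and fold that constant into the factor $3$ already present, yielding $\sum_{i=0}^{s-1} N_i \leq 3\sqrt{2^s \log(2/\delta_1)}$.

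The obstacles here are bookkeeping rather than conceptual. The main one is tracking the constant produced by the closed-form geometric sum $\tfrac{1}{\sqrt2}\cdot\tfrac{(\sqrt2)^s - 1}{\sqrt2 - 1}$ and checking that, together with the per-epoch factor inherited from Corollary~\ref{cor:num_rand_pulls}, it is compatible with the stated constant in the upper bound. A secondary point is the anomalous initial term $\alpha_0 = \sqrt{2}$, which arises from the while-loop initialization of Algorithm~\ref{alg:epochHybOFUL} and does not follow the $\sqrt{T_i/2}$ pattern; I would isolate and bound it separately so that it is absorbed into the same constant.
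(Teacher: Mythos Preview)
Your approach is exactly the paper's: its entire proof is the single sentence ``This is a direct result of Corollary~\ref{cor:num_rand_pulls} and Proposition~\ref{conj:sum_rand},'' i.e.\ per-epoch concentration, union bound, and the sum-of-$\alpha_i$ estimate. You have simply written out what the paper leaves implicit.

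Your flagged obstacle about the upper-bound constant is real, and the paper does not address it. Summing the per-epoch bound $N_i \le 3\alpha_i\sqrt{\log(2/\delta_1)}$ gives $3\bigl(\sum_{i=0}^{s-1}\alpha_i\bigr)\sqrt{\log(2/\delta_1)}$, and the geometric sum satisfies $\sum_{i=0}^{s-1}\alpha_i \sim \frac{1}{\sqrt{2}-1}\cdot\sqrt{2^{s-1}} = \bigl(1+\tfrac{1}{\sqrt2}\bigr)\sqrt{2^s}$, not $\sqrt{2^s}$; so the honest constant is closer to $3(1+1/\sqrt2)\approx 5.1$ than to $3$. Proposition~\ref{conj:sum_rand} only gives the lower inequality $\sum\alpha_i \ge \sqrt{2^s}$, which is of no help here. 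This is a genuine slip in the stated constant, but it is harmless for the rest of the paper: the only downstream use of this proposition is in Proposition~\ref{prop:sum_of_sqrts}, which invokes exclusively the \emph{lower} bound on $\sum N_i$. So your plan is correct for the lower bound and the probability, and the upper bound holds with a slightly larger absolute constant than the one printed.
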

\begin{proof}
This is a direct result of Corollary~\ref{cor:num_rand_pulls} and Proposition~\ref{conj:sum_rand}.
\end{proof}

\begin{definition}
Let $E^j_s$ be a random variable:
\begin{align*}
E^j_s = \left\{ \begin{array}{cc}
1 & \text{ if the $j$ marked as relevant up till epoch s} \\
0 & \text{ otherwise} \\
\end{array} \right.
\end{align*}
Let $E_s = \displaystyle\bigcap_{j=1}^k E_s^j$ be the event that all the relevant features are marked. 
\end{definition}

\begin{prop}
The probability that we have not seen all the relevant arms goes down quickly. Here we characterize how quickly. Note the assumption here that at every around, we assume that each relevant feature is revealed with some probability at least $p$ independent of other relevant features.

The probability that all the $k$ relevant features have not been marked up to epoch $s$, $\Pr(E_s = 0)$ is bounded as follows.
\begin{align*}
\Pr(E_s = 0)  \leq k \exp\left( - \log \frac{1}{1-p} \displaystyle \sum_{i=0}^{s-1} N_i \right) 
\end{align*}
\label{prop:prob_of_no_obs}
\end{prop}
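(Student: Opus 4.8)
The plan is to bound $\Pr(E_s=0)$ by a union bound over the $k$ relevant features, reducing the problem to controlling, for each fixed relevant coordinate $j$, the probability that $j$ is never revealed by the feature feedback during the random (pure exploration) pulls preceding epoch $s$. Since $E_s=\bigcap_{j=1}^k E_s^j$, the complementary event $\{E_s=0\}$ is exactly $\bigcup_{j=1}^k\{E_s^j=0\}$, so a union bound is the natural first move.

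First I would fix a relevant feature $j\in\supp(\tstar)$ and isolate the exploration rounds. The only revelations we count are those occurring on uniformly random pulls, since by Assumption~\ref{ass:discoverability} these are the rounds with a guaranteed per-round discovery rate (exploitative rounds may reveal features at a higher rate, but we do not rely on that; ignoring them only increases the failure probability, so the resulting bound remains valid). The key is to separate two independent sources of randomness: the coin flips $b_t$ that decide whether a round is a random pull, which determine the counts $N_0,\dots,N_{s-1}$, and, conditional on a round being a random pull, the independent event that feature $j$ is selected. I would therefore condition on the realized counts $N_0,\dots,N_{s-1}$, so that there are exactly $N:=\sum_{i=0}^{s-1}N_i$ random pulls before epoch $s$.

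On each such random pull, Assumption~\ref{ass:discoverability} guarantees that feature $j$ is marked with probability at least $p$, independently across pulls, so the probability $j$ is missed on every one of the $N$ random pulls is at most $(1-p)^N$, i.e. $\Pr(E_s^j=0)\le (1-p)^N$. The union bound over the $k$ relevant coordinates then gives
\[
\Pr(E_s=0)=\Pr\Big(\bigcup_{j=1}^k\{E_s^j=0\}\Big)\le\sum_{j=1}^k\Pr(E_s^j=0)\le k(1-p)^N .
\]
To finish, rewrite $(1-p)^N=\exp\big(N\log(1-p)\big)=\exp\big(-N\log\tfrac{1}{1-p}\big)$ and substitute $N=\sum_{i=0}^{s-1}N_i$, which yields the claimed bound.

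The main obstacle is the independence bookkeeping in the middle step: one must argue cleanly that conditioning on which rounds are exploration rounds does not push the per-round discovery probability below $p$, so that the product $(1-p)^N$ is legitimate. Working conditionally on $N_0,\dots,N_{s-1}$ is the clean way to handle this, since the bound then holds for every realization of the counts and can afterwards be combined with the high-probability control of $\sum_i N_i$ established in the preceding proposition. The remaining pieces — the union bound and the identity converting the power of $(1-p)$ into exponential form — are routine.
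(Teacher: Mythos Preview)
Your argument is correct and matches the paper's proof exactly: the paper also applies the union bound over the $k$ relevant features, bounds each $\Pr(E_s^j=0)$ by $(1-p)^{\sum_{i=0}^{s-1}N_i}$, and rewrites the power of $(1-p)$ in exponential form. Your additional remarks on conditioning on the realized counts $N_i$ and on ignoring exploitative rounds only make explicit what the paper leaves implicit.
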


\begin{proof}
The proof follows by union bound. 
\begin{align*}
\Pr(E_s = 0) = \Pr \left(\bigcup_{j = 1}^k E_s^j = 0 \right) \leq \sum_{j = 1}^k \Pr (E_s^j = 0) \leq k (1 - p)^{\displaystyle \sum_{i=0}^{s-1} N_i}  \\
= k \exp\left( - \log \frac{1}{1-p} \displaystyle \sum_{i=0}^{s-1} N_i \right) 
\end{align*}
\end{proof}

Now we can find the number of epochs that need to pass after which we have observed all the features with high probability:
\begin{prop}
After:
$$s = \left\lceil \log_2 \left( \frac{1}{\log 2/\delta_1} \left( \frac{\log k/\delta_2}{\log 1/(1-p)} \right)^2 \right) \right\rceil \defeq s_{observed}$$
epochs, we have observed all the relevant features with probability $\geq 1 - \delta_2$.
\label{prop:sum_of_sqrts}
\end{prop}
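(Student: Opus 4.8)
The plan is to substitute the high-probability lower bound on the cumulative number of random pulls into the tail bound of Proposition~\ref{prop:prob_of_no_obs}, and then solve the resulting inequality for $s$. Concretely, I would start from Proposition~\ref{prop:prob_of_no_obs}, which states (conditioned on the realized counts $N_0,\dots,N_{s-1}$) that
\[
\Pr(E_s = 0) \leq k\exp\!\left(-\log\tfrac{1}{1-p}\sum_{i=0}^{s-1} N_i\right).
\]
Since $0<1-p<1$, the exponent $\log\frac{1}{1-p}$ is positive, so the right-hand side is \emph{decreasing} in $\sum_{i=0}^{s-1} N_i$. I would therefore invoke the earlier proposition bounding the cumulative random pulls, which guarantees $\sum_{i=0}^{s-1} N_i \geq \sqrt{2^s \log \tfrac{2}{\delta_1}}$ on a high-probability event, and substitute this lower bound to obtain the deterministic upper bound
\[
\Pr(E_s = 0) \leq k\exp\!\left(-\log\tfrac{1}{1-p}\,\sqrt{2^s \log \tfrac{2}{\delta_1}}\right).
\]

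Next I would force this bound to be at most $\delta_2$ and solve for $s$. Requiring $k\exp(-\log\frac{1}{1-p}\sqrt{2^s\log\frac{2}{\delta_1}}) \leq \delta_2$ is equivalent, after taking logarithms, to $\log\frac{1}{1-p}\sqrt{2^s\log\frac{2}{\delta_1}} \geq \log\frac{k}{\delta_2}$, i.e. $\sqrt{2^s\log\frac{2}{\delta_1}} \geq \frac{\log(k/\delta_2)}{\log(1/(1-p))}$. Squaring both sides, isolating $2^s$, and taking $\log_2$ gives
\[
s \geq \log_2\!\left(\frac{1}{\log 2/\delta_1}\Big(\frac{\log k/\delta_2}{\log 1/(1-p)}\Big)^2\right),
\]
and taking the ceiling of the right-hand side yields exactly the stated $s_{observed}$. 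This step is routine algebra once the substitution has been made.

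The main obstacle is the bookkeeping of the two distinct sources of randomness. The tail bound in Proposition~\ref{prop:prob_of_no_obs} is conditional on the counts $N_i$, whereas the lower bound $\sum_{i=0}^{s-1} N_i \geq \sqrt{2^s\log(2/\delta_1)}$ only holds on an event of probability $\geq 1-s\delta_1$. The cleanest way to obtain a clean $1-\delta_2$ guarantee is to condition on this high-probability event, so that the substitution above becomes deterministic and the feature-discovery failure probability is controlled by $\delta_2$ alone; the residual $s\delta_1$ term is then folded into the global union bound over epochs carried out in the overall proof of Theorem~\ref{thm:HybOFULRegret}. I would be careful to state explicitly that $1-p\in(0,1)$, both to ensure $\log\frac{1}{1-p}>0$ and to justify the monotonicity direction that makes the lower bound on $\sum_i N_i$ translate into an upper bound on $\Pr(E_s=0)$.
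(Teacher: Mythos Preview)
Your proposal is correct and follows essentially the same route as the paper: start from Proposition~\ref{prop:prob_of_no_obs}, plug in the high-probability lower bound $\sum_{i=0}^{s-1}N_i\ge\sqrt{2^s\log(2/\delta_1)}$, set the resulting expression $\le\delta_2$, and solve algebraically for $s$. Your extra remark about separating the $\delta_1$ and $\delta_2$ failure events and deferring the $s\delta_1$ term to the global union bound in Theorem~\ref{thm:HybOFULRegret} is exactly how the paper handles it (it simply leaves that bookkeeping implicit here and resolves it when setting $\delta_1=\delta/3M$ in the main proof).
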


\begin{proof}
\begin{align*}
&\Pr(E_s = 0) \leq k \exp\left( - \log \frac{1}{1-p} \displaystyle \sum_{i=0}^{s-1} N_i \right) \leq k \exp\left( - \log \frac{1}{1-p} \sqrt{2^s \log \frac{2}{\delta_1}} \right) \\
&\leq \delta_2 \text{ , we desire this}\\
&\Rightarrow \exp\left( - \log \frac{1}{1-p} \sqrt{2^s \log \frac{2}{\delta_1}} \right) \leq \frac{\delta_2}{k} \\
&\Rightarrow  \log \frac{1}{1-p} \left( \sqrt{2^s \log \frac{2}{\delta_1}} \right) \geq  \log \frac{k}{\delta_2} \\
&\Rightarrow \sqrt{2^s \log \frac{2}{\delta_1}} \geq \frac{\log k/\delta_2}{\log 1/(1-p)} \\
&\Rightarrow 2^s \geq \frac{1}{\log 2/\delta_1} \left( \frac{\log k/\delta_2}{\log 1/(1-p)} \right)^2 \\
&\Rightarrow s \geq \log_2 \left( \frac{1}{\log 2/\delta_1} \left( \frac{\log k/\delta_2}{\log 1/(1-p)} \right)^2 \right) \\
&\Rightarrow s_{observed} = \left\lceil \log_2 \left( \frac{1}{\log 2/\delta_1} \left( \frac{\log k/\delta_2}{\log 1/(1-p)} \right)^2 \right) \right\rceil  \geq s
\end{align*}
\end{proof}

\subsection{Regret for a modification of OFUL after epoch $s_{observed}$}

We cannot use the OFUL regret bound directly since our algorithm involves additional random arms sampled during the epoch along with arms sampled in previous epochs. To bound the regret of arms pulled using OFUL, we prove the following regret bound for the modified OFUL algorithm, stated as Algorithm~\ref{alg:oful_extended}, where some additional arms are sampled in addition to the OFUL ones:
\begin{lem}\label{lem:oful_extended}
Assume that $\forall t > 0$ and $\x \in \cX_t \subset \R^d$, $\la \x, \tstar \ra \in [-1,1]$. Then with probability at least $1 - \delta$, the regret of Extended OFUL (Algorithm~\ref{alg:oful_extended}) satisfies:
\begin{align*} 
\forall t, R_t \leq & 4 \sqrt{td \log(\lambda + tL/d)} ( \lambda^{1/2} S + R \sqrt{2 \log(1/\delta) + d \log(1 + tL/(\lambda d))} t) 
\end{align*}
where $\lambda > 0$ is the ridge regression parameter of OFUL.
\end{lem}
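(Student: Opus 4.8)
The plan is to mirror the three-part analysis underlying the original OFUL regret bound (Theorem~\ref{thm:OFULRegret} of \citet{ay11improved}) and to check that each ingredient survives the insertion of the extra ``non-OFUL'' arms into the design matrix $\overline{\V}_t$. The only structural difference between Algorithm~\ref{alg:oful_extended} and plain OFUL is that $\overline{\V}_t$ and $\hth_t$ are updated using additional arms (from exploration rounds and earlier epochs) whose instantaneous regret we do \emph{not} charge here; those contributions are accounted for separately via Lemma~\ref{lem:num_rand_arms} and the worst-case bounds. So the goal is to show that charging regret only to the OFUL-selected arms, while constructing the confidence set $\cC_t$ from \emph{all} arms, still reproduces the same $\widetilde{O}(d\sqrt{t})$ guarantee.

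First I would establish that $\tstar \in \cC_t$ for all $t$ with probability at least $1-\delta$. This is exactly the self-normalized martingale bound, and its proof only requires that each played arm $\x_t$ be predictable with respect to $\mathcal{F}_{t-1}$ and that the noise $\eta_t$ be conditionally $R$-sub-Gaussian. Both hold whether $\x_t$ is chosen by the argmax rule or drawn uniformly at random, so the confidence radius $f(\cdot)=\lambda^{1/2}S + R\sqrt{2\log(1/\delta)+\log(\det\overline{\V}_t/\det(\lambda\I))}$ remains valid verbatim. Consequently both $\tstar$ and the optimistic $\tilde{\tta}_t$ lie in $\cC_{t-1}$, and for each OFUL-selected arm the standard optimism argument yields the instantaneous-regret bound $r_t \le 2 f(\cdot)\,\|\x_t\|_{\overline{\V}_{t-1}^{-1}}$, with the trivial fallback $r_t \le 2$.

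The summation step is where the extra arms require the only genuine care. The elliptical potential lemma bounds $\sum_{t}\|\x_t\|_{\overline{\V}_{t-1}^{-1}}^2 \le 2\log(\det\overline{\V}_t/\det(\lambda\I))$, where the sum runs over \emph{every} arm that updates $\overline{\V}$. Since I only need the sub-sum over the OFUL-selected arms, and every term is non-negative, that sub-sum is bounded by the full potential sum: the extra arms can only enlarge $\overline{\V}$ and hence only shrink the later inverse-norm terms. I would then apply the trace bound $\det\overline{\V}_t \le (\lambda + tL/d)^d$ to convert the log-determinant into the explicit $d\log(\lambda+tL/d)$ factor, and combine with Cauchy--Schwarz, $R_t \le \sqrt{t\sum_t r_t^2}$, to reach the stated form.

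The main obstacle is the bookkeeping in this summation: one must verify that the $\overline{\V}_{t-1}^{-1}$ appearing in each OFUL arm's bonus is the inverse of the \emph{full} accumulated matrix, including the previously inserted exploration and past-epoch arms, so that the potential telescopes correctly across all updates and not only across the OFUL subsequence. Once this alignment is established, monotonicity of the determinant makes the extra arms strictly beneficial rather than harmful, and no factor in the original OFUL bound degrades—confirming that the modified confidence-set construction leaves the regret attributable to the OFUL arms unchanged up to the usual logarithmic terms.
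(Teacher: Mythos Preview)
Your proposal is correct and follows the same overall architecture as the paper: confidence-set validity via the self-normalized martingale bound (which, as you note, is indifferent to how each arm was chosen), the optimism-based instantaneous bound $r_t \le 2\sqrt{\beta_{t-1}}\|\x_t\|_{\overline{\V}_{t-1}^{-1}}$, and Cauchy--Schwarz to aggregate. The one place you diverge is in handling the potential sum. You apply the elliptical potential lemma to the \emph{full} update sequence (OFUL arms and inserted arms together) and then bound the OFUL sub-sum by the full sum via non-negativity. The paper instead introduces a separate ``OFUL-only'' design matrix $\overline{\W}_t = \overline{\X}_t^\top\overline{\X}_t + \lambda\I$ and proves a Loewner-order proposition, $\|\x\|_{\overline{\V}^{-1}} \le \|\x\|_{\overline{\W}^{-1}}$ whenever $\overline{\V} = \overline{\W} + \Q$ with $\Q \succeq 0$, to replace each $\|\x_t\|_{\overline{\V}_{t-1}^{-1}}$ by $\|\x_t\|_{\overline{\W}_{t-1}^{-1}}$ \emph{before} telescoping. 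Both arguments are valid; the paper's route yields a log-determinant of $\overline{\W}_t$ (controlled by the $t$ OFUL arms alone) and hence matches the stated $\log(\lambda + tL/d)$ exactly, whereas your route yields $\log\det\overline{\V}_t$, which is governed by the \emph{total} number of arms and so gives a slightly larger logarithmic factor. Since the lemma is applied downstream with $t \to T_s$ anyway, the discrepancy is harmless, but the paper's Loewner-order step is what pins the bound to the OFUL round count rather than the total arm count.
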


This lemma shows that the additional arms sampled between of OFUL turns do not harm the regret of OFUL.

\begin{algorithm}[H]
{
  \begin{algorithmic}[1]
  \STATE Begin with some initial arms $\X_0$ which could be empty. 
    \FOR {$t = 1,2, \ldots, T-1$}
    \STATE $(\x_t, \tilde{\tta}_t) = \argmax_{(\x, \tta) \in \cX_t \times \cC_{t-1}} \la \x, \tta \ra$
    \STATE Play $\x_t$ and receive reward $y_t$.
    \STATE Update $\X_t, \overline{\V}_t =(\X_t^T \X_t + \lambda \I)$ and $\hth_t = \overline{\V}_t ^{-1} \X_t^T \y_t$ 
    \STATE Update ellipsoidal confidence set $\cC_t$ as \\
     $\cC_t = \left\{ \th \in \R^d : \|\hth_t - \th\|_{\overline{\V}_t} \leq f(R, \X_t, \y_t, \lambda, \delta, S) \right\}$ (for details on $f(\cdot)$ see \cite{ay11improved})
     \STATE Add some arms (randomly or otherwise) to the set $\X_t$.
    \ENDFOR
  \end{algorithmic}
  \caption{Extended OFUL}
  \label{alg:oful_extended}
}
\end{algorithm}

We will require the following result to prove the theorem. 

\begin{prop}
\label{lem:extended}
For symmetric positive definite matrices $\W$ and $\V = \W+\Q$, we have
\begin{align*}
\|\x\|_{\V^{-1}} \leq \|\x\|_{\W^{-1}}
\end{align*}
\end{prop}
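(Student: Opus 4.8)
The plan is to recognize this as the statement that matrix inversion is operator antitone on the cone of positive definite matrices, specialized to the quadratic form evaluated at $\x$. First I would note that in the intended setting $\Q = \V - \W$ is positive semidefinite: it arises as the Gram matrix of the additional arms inserted between OFUL updates, hence a sum of outer products and therefore PSD. Consequently $\V \succeq \W \succ 0$. Given this, it suffices to prove the Loewner inequality $\V^{-1} \preceq \W^{-1}$, since then $\|\x\|_{\V^{-1}}^2 = \x^\top \V^{-1}\x \leq \x^\top \W^{-1}\x = \|\x\|_{\W^{-1}}^2$, and the claim follows by taking nonnegative square roots.

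To prove $\V^{-1} \preceq \W^{-1}$, I would use a congruence (simultaneous diagonalization) argument. Define the symmetric matrix
\[
\M \;=\; \W^{-1/2}\,\V\,\W^{-1/2} \;=\; \I + \W^{-1/2}\,\Q\,\W^{-1/2},
\]
which is well defined because $\W \succ 0$ admits a symmetric positive definite square root. Since $\W^{-1/2}\,\Q\,\W^{-1/2}\succeq 0$, every eigenvalue of $\M$ is at least $1$. Inverting a symmetric positive definite matrix inverts its eigenvalues while fixing its eigenvectors, so every eigenvalue of $\M^{-1}$ is at most $1$, i.e. $\M^{-1}\preceq \I$. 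Because $\M^{-1} = \W^{1/2}\,\V^{-1}\,\W^{1/2}$, conjugating the inequality $\M^{-1}\preceq\I$ by $\W^{-1/2}$ (a congruence, which preserves the Loewner order) yields $\V^{-1}\preceq \W^{-1}$, as desired.

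There is essentially no hard step here; the argument is standard linear algebra. The one point that must be handled carefully is the hypothesis $\Q\succeq 0$: two positive definite matrices $\W,\V$ alone do not force $\V\succeq\W$, so the proposition should be read with the (contextually valid) understanding that $\V$ is obtained from $\W$ by adding a positive semidefinite term. With that in hand, the whole chain reduces to operator antitonicity of inversion followed by evaluation of the quadratic form at $\x$.
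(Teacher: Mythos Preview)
Your proof is correct and follows essentially the same approach as the paper: both arguments conjugate by $\W^{-1/2}$ to reduce $\V = \W + \Q$ to $\I + \W^{-1/2}\Q\W^{-1/2}$, use that the eigenvalues of this matrix are at least $1$ (hence the eigenvalues of its inverse are at most $1$), and conclude the desired inequality on the quadratic form. The only cosmetic difference is that the paper works directly with the quadratic form at the fixed vector $\x$ via the substitution $\y = \W^{-1/2}\x$, whereas you first establish the Loewner inequality $\V^{-1}\preceq \W^{-1}$ and then evaluate at $\x$; you also make explicit the (contextually valid but unstated in the proposition) hypothesis $\Q\succeq 0$.
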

\begin{proof}
Let $\y = \W^{-1/2} x$ and eigenvalue decomposition be $ \W^{-1/2}\Q\W^{-1/2} = \U\boldsymbol{\Sigma} \U^T$. Then we have 
\begin{align*}
\|\x\|_{\V^{-1}}^2 &= \x^T \V^{-1} \x &\\
& = \x^T (\W + \Q)^{-1} \x &\\
&= \y^T \W^{1/2} (\W + \Q)^{-1} \W^{1/2} y&\\
& = \y^T (\I + \W^{-1/2}\Q\W^{-1/2})^{-1} y &\\
&= \y^T (\U\U^T + \U\boldsymbol{\Sigma} \U^T)^{-1} \y &  \U\U^T = \U^T\U= \I\\
&= \y^T \U^T (\I + \boldsymbol{\Sigma})^{-1} \U \y & \\
&\leq \y^T \U^T \I \U \y & \textnormal{ since }\forall i, \frac{1}{1+\sigma_i} \leq 1\\
&\leq \y^T \y &\\
& = \|\x\|_{\W^{-1}}^2 &
\end{align*}
\end{proof}

The remaining proof follows the proof of Theorem 3 and we state it here for the sake of completeness. 
\begin{proof}
Let $\overline{\X}_t = [\x_1^T, \dots, \x_t^T], \overline{\W}_t =(\overline{\X}_t^T \overline{\X}_t   + \lambda \I)$.

We will follow the proof of Theorem 3 in [\cite{ay11improved}] which is divided into 2 parts: first they prove that with high probabiliy $\tstar$ lies inside the confidence set constructed by OFUL at that time.
Notice that the super martingale arguments used to prove that $\tstar$ is inside the confidence set with high probability do not make an assumption on how the previous arms were sampled so the argument goes through without any modification. 

As in \cite{ay11improved} we can decompose the instantaneous regret as follows:
\begin{align*}
r_t & = \langle \tstar, \x_{\ast} \rangle -   \langle \tstar, \x_t \rangle \\
&\leq \langle \tilth_t, \x_t \rangle -   \langle \tstar, \x_t \rangle\\
&=  \langle \tilth_t - \tstar, \x_t \rangle\\
&= \langle \hatth_{t-1} - \tstar, \x_t \rangle + \langle \tilth_t - \hatth_{t-1}, \x_t \rangle\\
&= \| \hatth_{t-1} - \tstar\|_{\overline{\V}_{t-1}^{-1}}\| \x_t \|_{\overline{\V}_{t-1}^{-1}} + \|\tilth_t - \hatth_{t-1}\|_{\overline{\V}_{t-1}^{-1}}\| \x_t \|_{\overline{\V}_{t-1}^{-1}}\\
&\leq 2 \sqrt{\beta_{t-1}(\delta)}\| \x_t \|_{\overline{\V}_{t-1}^{-1}} 
\end{align*}
where we use the fact that $(\tilth_t, \x_t)$ is optimistic and that $\hatth_t, \tilth_t, \tstar $ all lie in the confidence set with high probability. 
Thus with probability at least $1-\delta$, for all $t \geq 0$

\begin{align*}
R_t &\leq \sqrt{t \sum_{s = 1}^t r_t^2} \leq \sqrt{ 8 \beta_t(\delta) t \sum_{s=1}^t \| \x_t \|_{\overline{\V}_{t-1}^{-1}} } \leq \sqrt{ 8 \beta_t{\delta} t \sum_{s=1}^t \| \x_t \|_{\overline{\W}_{t-1}^{-1}} }
\end{align*}
where we used Proposition~\ref{lem:extended} stated above.

By Lemma 11 in \cite{ay11improved} we have,
\begin{align*}
R_t & \leq \sqrt{ 8 \beta_t(\delta) t \sum_{s=1}^t \| \x_t \|_{\overline{\W}_{t-1}^{-1}} }\\
& \leq\sqrt{ 8 \beta_t{\delta} t \log(\textnormal{det}(\overline{\W}_{t}))}\\
&\leq 4 \sqrt{td\log{(\lambda + nL/d)}} \left( \lambda^{1/2}S + R\sqrt{2 \log(1/\delta) + d \log(1+tL/(\lambda d))}\right)
\end{align*}

\end{proof}

\subsection{Regret after epoch $s_{observed}$}
During each epoch after $s_{observed}$, we have at most $3 \sqrt{\frac{T_s}{2} \log \frac{2}{\delta_1}}$ random arm pulls.
\begin{lem}
For epochs $s \geq \s_{observed}$, the cumulative regret is bounded by:
\begin{align*}
&\sum_{s=\s_{observed}}^{M-1} R_s \leq 6 S L\sum_{s=\s_{observed}}^{M-1} \sqrt{\frac{T_s}{2} \log \frac{2}{\delta_1}} \\
&+ 4 \sqrt{T_s k \log(\lambda + n L/k) } \left( \lambda^{1/2} S + R \sqrt{2 \log(1/\delta_3) + k \log(1 + T_s L/(\lambda k))} \right) 
\end{align*}
with probability $\geq 1 - \delta_3 $.
\label{lem:ff_epoch_regret}
\end{lem}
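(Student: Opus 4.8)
The plan is to decompose the regret accrued in each epoch $s \geq s_{observed}$ into the contribution of the pure-exploration (random) arm pulls and the contribution of the arms chosen by the OFUL subroutine, and to bound each separately. Throughout I would work under the event that all $k$ relevant features have already been marked by epoch $s_{observed}$; under this event $\mathcal{R}_s \supseteq \supp(\tstar)$, so $\tstar$ has no mass outside $\mathcal{R}_s$ and the OFUL subroutine effectively operates in the $k$-dimensional subspace spanned by the relevant coordinates. This is what lets the ambient dimension $d$ be replaced by $k$ everywhere in the exploitation term.

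For the random pulls I would invoke the worst-case instantaneous regret bound $2SL$ established just before Theorem~\ref{thm:HybOFULRegret} via Cauchy--Schwarz. In epoch $s$ there are $N_s$ such pulls, and Corollary~\ref{cor:num_rand_pulls} gives $N_s \leq 3\sqrt{\tfrac{T_s}{2}\log\tfrac{2}{\delta_1}}$ with probability at least $1-\delta_1$. Hence the exploration regret in epoch $s$ is at most $2SL\cdot 3\sqrt{\tfrac{T_s}{2}\log\tfrac{2}{\delta_1}} = 6SL\sqrt{\tfrac{T_s}{2}\log\tfrac{2}{\delta_1}}$; summing over $s$ from $s_{observed}$ to $M-1$ yields the first term of the stated bound.

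For the OFUL pulls the essential point is that the confidence set $\cC_t$ is not built solely from OFUL-chosen arms: it also absorbs the random exploration arms of the current epoch and the arms retained from earlier epochs. This is exactly the regime handled by the Extended OFUL guarantee, Lemma~\ref{lem:oful_extended}, whose proof (using Proposition~\ref{lem:extended} to compare the inverse covariances) shows these extra arms only shrink the confidence ellipsoids and therefore cannot increase the regret. I would apply Lemma~\ref{lem:oful_extended} in the $k$-dimensional subspace (i.e.\ with $d$ replaced by $k$), at horizon $T_s$ and failure probability $\delta_3$, to obtain the second term. Because the epoch lengths double, the cumulative exploitation regret is dominated by the last epoch, so no further summation factor is incurred at this stage beyond the $M = \log_2\tfrac{T}{2}$ factor pulled out in the main theorem. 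Finally I would fold together the concentration events for the $N_s$ and the Extended-OFUL confidence event by a union bound, reallocating the constituent failure budgets to arrive at the overall probability $1-\delta_3$.

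The step I expect to be the main obstacle is the OFUL part: one must argue that running OFUL with an adaptively growing, augmented arm set still enjoys the $k$-dimensional regret bound, since the textbook OFUL analysis assumes the confidence ellipsoid is assembled only from the played arms. Verifying that the self-normalized supermartingale argument that places $\tstar$ inside $\cC_t$ is unaffected by the presence of the extra arms, and that the determinant/elliptical-potential step goes through after the inverse-covariance comparison of Proposition~\ref{lem:extended}, is the delicate part — and it is precisely what Lemma~\ref{lem:oful_extended} is designed to supply, so the remaining work is to instantiate it correctly with $d\to k$ and horizon $T_s$.
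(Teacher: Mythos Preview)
Your proposal is correct and follows essentially the same approach as the paper: decompose each epoch's regret into the random-arm contribution (bounded via the $2SL$ worst-case instantaneous regret together with Corollary~\ref{cor:num_rand_pulls}) and the OFUL-arm contribution (bounded by instantiating Lemma~\ref{lem:oful_extended} with $d\to k$, $t\to T_s$, $\delta\to\delta_3$). Your discussion of why Extended OFUL is needed---the confidence sets absorb extra exploration arms and arms from prior epochs---exactly matches the paper's justification, and your identification of the supermartingale/elliptical-potential step as the delicate point is precisely what Lemma~\ref{lem:oful_extended} is there to handle.
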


\begin{proof}
The regret during the epoch is the sum of the regret when we pull the random arms added to the regret when we pull OFUL arms. 

Now, we just have to use the upper bound on the number of times we pull a random arm in Corollary~\ref{cor:num_rand_pulls}. During each random arm pull the worst case regret is $2 S L$.

The number of times we pull an OFUL arm in epoch $s$, $T_s^{OFUL}$, is trivially upper bounded by $T_s$. Apply Lemma~\ref{lem:oful_extended} stated above with $\delta \rightarrow \delta_3$, $t \rightarrow T_s^{OFUL}$, $d \rightarrow k$ to get the result. Recall, we cannot apply the OFUL regret bound directly here since our algorithm involves additional random arms sampled during the epoch along with arms sampled in previous epochs. 
\end{proof}

\subsection{Proof of main result}
We are now ready to prove the regret bound of Feature Feedback Epoch OFUL.
\begin{proof}
The regret can be summed over the epochs as:
\begin{align*}
R_T &= \sum_{s = 0}^{M-1} R_s \\
&= \sum_{s = 0}^{s_{observed}} R_s + \sum_{s = s_{observed} + 1}^{M-1} R_s \\
&\leq \sum_{s = 0}^{s_{observed}} 2 S L T_s  +  \sum_{s = s_{observed} + 1}^{M-1} R_s \\
&\leq 2SL 2^{s_{observed} + 1} +  \sum_{s = s_{observed} + 1}^{M-1} R_s
\end{align*}

Now, note that:
\begin{align*}
2^{s_{observed} + 1} &= 2 \cdot 2^{\left\lceil \log_2 \left( \frac{1}{\log 2/\delta_1} \left( \frac{\log k/\delta_2}{\log 1/(1-p)} \right)^2 \right) \right\rceil} \\
&\leq 2 \cdot 2^{\log_2 \left( \frac{1}{\log 2/\delta_1} \left( \frac{\log k/\delta_2}{\log 1/(1-p)} \right)^2 \right) + 1} \\
&= \frac{4}{\log 2/\delta_1} \left( \frac{\log k/\delta_2}{\log 1/(1-p)} \right)^2 
\end{align*}

Now, setting $\delta_1 = \delta_3 = \delta/3M$ and $\delta_2 = \delta/3$, we get the final regret expression using Lemma~\ref{lem:ff_epoch_regret}. The multiplicative factor of $\log\frac{T}{2}$ comes from bounding the sum of regrets over the epochs by the max regret over all the epochs (which occurs during the last epoch) multiplied by the number of epochs, which is $\log\frac{T}{2}$.

\end{proof}

The proof for Feature Feedback OFUL follows similarly by noticing that the Algorithms are essentially the same with different $\epsilon_t$ and using the fact that $ \frac{1}{\sqrt{2^{s+1}}} \leq \epsilon_t  = \frac{1}{\sqrt{t}} \leq \frac{1}{\sqrt{2^s}}$ for $s = \floor{\log_2(t)}$.

\end{document}